\crefname{assumption}{Assumption}{Assumptions}
\crefname{openproblem}{Open Problem}{Open Problem}
\crefname{notation}{Notation}{Notation}
\newcolumntype{x}[1]{>{\centering\arraybackslash\hspace{0pt}}p{#1}}
\newcommand*{\cX}{\mathcal{X}}
\newcommand*{\cY}{\mathcal{Y}}
\newcommand*{\cH}{\mathcal{H}}
\newcommand*{\cK}{\mathcal{K}}
\newcommand*{\cP}{\mathcal{P}}
\newcommand*{\cO}{\mathcal{O}}
\newcommand*{\cE}{\mathcal{E}}
\newcommand*{\bz}{\mathbf{z}}
\newcommand*{\bw}{\mathbf{w}}
\newcommand*{\R}{\mathbb{R}}
\newcommand{\bsw}{{\boldsymbol{w}}}
\newcommand{\bsX}{{\boldsymbol{X}}}
\newcommand*{\init}{{\textup{ref}}}
\newcommand*{\fin}{{\textup{tar}}}
\newcommand*{\den}{\rho}
\newcommand*{\vel}{v}
\newcommand*{\FP}{{\textup{FP}}}
\newcommand*{\SVGD}{{\textup{SVGD}}}
\newcommand*{\trd}{\rho_{\textup{tar}}}
\newcommand*{\simpd}{\rho_{\textup{ref}}}
\newcommand*{\Ball}[2]{B_{#2}(#1)}
\DeclareMathOperator{\diver}{div}
\DeclareMathOperator{\dkl}{D_{KL}}
\DeclareMathOperator{\KDE}{KDE}
\DeclareMathOperator{\MMD}{MMD}
\DeclareMathOperator{\Unif}{Unif}
\mathchardef\ordinarycolon\mathcode`\:
\definecolor{ourblue}{rgb}{0,0,0.7}
\definecolor{ouryellow}{rgb}{1.0, 0.88, 0.21}
\newcommand{\hlc}[2][yellow]{{%
		\colorlet{foo}{#1}%
		\sethlcolor{foo}\hl{#2}}%
}
\newcommand*{\bR}{\mathbb{R}}
\newcommand*{\bN}{\mathbb{N}}
\newcommand*{\bE}{\mathbb{E}}
\newcommand*{\bP}{\mathbb{P}}
\newcommand*{\cN}{\mathcal{N}}
\newcommand*{\iid}{\textup{i.i.d.}}
\newcommand*\quark{\mathpalette\quark@{.5}}
\newcommand*\quark@[2]{\mathbin{\vcenter{\hbox{\scalebox{#2}{$\; \m@th#1\bullet \;$}}}}}
\newcommand{\mylabel}[2]{#2\def\@currentlabel{#2}\label{#1}}
\newcommand*{\rd}{\mathrm{d}}
\DeclareMathOperator{\Id}{Id}
\DeclareMathOperator*{\argmin}{arg\,min}
\definecolor{mygreen}{rgb}{0,0.6,0}
\definecolor{darkgreen}{rgb}{0,0.4,0}
\newcommand*{\todo}[1]{\bgroup\color{red}TODO:~#1\egroup}
\newcommand*{\tjs}[1]{\bgroup\color{olive}TJS:~#1\egroup}
\newcommand*{\bs}[1]{\bgroup\color{orange}BS:~#1\egroup}
\newcommand*{\ik}[1]{\bgroup\color{cyan}IK:~#1\egroup}
\DeclarePairedDelimiter\abs{\lvert}{\rvert}%
\DeclarePairedDelimiter\mynorm{\lVert}{\rVert}%
\let\oldabs\abs
\def\abs{\@ifstar{\oldabs}{\oldabs*}}
\let\oldnorm\mynorm
\def\mynorm{\@ifstar{\oldnorm}{\oldnorm*}}
\newcommand*{\defeq}{\coloneqq}
\theoremstyle{plain}
\newtheorem{theorem}{\sffamily Theorem}[section]
\newtheorem{proposition}[theorem]{\sffamily Proposition}
\theoremstyle{definition}
\newtheorem{definition}[theorem]{\sffamily Definition}
\newtheorem{algorithm}[theorem]{\sffamily Algorithm}
\newtheorem{remark}[theorem]{\sffamily Remark}
\newcommand{\absval}[1]{\lvert #1 \rvert}
\newcommand{\innerprod}[2]{\langle #1 , #2 \rangle}
\newcommand{\norm}[1]{\lVert #1 \rVert}
\newcommand{\Absval}[1]{\left\vert #1 \right\vert}
\numberwithin{equation}{section}
\numberwithin{figure}{section}
\numberwithin{table}{section}
\newcommand*{\arXiv}[1]{\bgroup\color{blue}\href{https://arxiv.org/abs/#1}{arXiv:#1}\egroup}
\newcommand*{\doi}[1]{\bgroup\color{blue}\href{https://doi.org/#1}{doi:#1}\egroup}
\newcommand*{\email}[1]{\bgroup\color{blue}\href{mailto:#1}{#1}\egroup}
\renewcommand*{\url}[1]{\bgroup\color{blue}\href{#1}{#1}\egroup}
\setlist{nosep}
\renewcommand{\qedsymbol}{$\blacksquare$}
\renewenvironment{proof}[1][\proofname]{\noindent{\bfseries\sffamily #1.} }{\hfill\qedsymbol\medskip}
\let\oldtitle\title
\renewcommand{\title}[1]{\oldtitle{#1}\newcommand{\theshorttitle}{#1}}
\newcommand{\shorttitle}[1]{\renewcommand{\theshorttitle}{#1}}
\let\oldauthor\author
\renewcommand{\author}[1]{\oldauthor{#1}\newcommand{\theshortauthor}{#1}}
\newcommand{\shortauthor}[1]{\renewcommand{\theshortauthor}{#1}}
\newcommand{\theabstract}[1]{\par\bgroup\noindent\textbf{\textsf{Abstract.}} #1\egroup}
\newcommand{\thekeywords}[1]{\par\smallskip\bgroup\noindent\textbf{\textsf{Keywords.}}\newcommand{\and}{ $\bullet$ } #1\egroup}
\newcommand{\themsc}[1]{\par\smallskip\bgroup\noindent\textbf{\textsf{2010 Mathematics Subject Classification.}}\newcommand{\and}{ $\bullet$ } #1\egroup}
\newcommand*{\affilref}[1]{\ref{affiliation#1}}
\newcommand*{\affiliation}[3]{
	\footnotetext[#1]{\label{affiliation#2}#3}
}
\setlist{topsep=0.3ex, itemsep=0.3ex}
\title{Deterministic Fokker--Planck Transport}
\shorttitle{Transporting Quasi-Monte Carlo Points and Sparse Grids}
\author{Ilja~Klebanov\textsuperscript{\affilref{FUB}}}
\date{\today}
\begin{document}
\maketitle
\affiliation{1}{FUB}{Freie Universit{\"a}t Berlin, Arnimallee 6, 14195 Berlin, Germany (\email{klebanov@zedat.fu-berlin.de})}

\begin{abstract}
	\theabstract{

The Fokker--Planck equation can be reformulated as a continuity equation, which naturally suggests using the associated velocity field in particle flow methods.
While the resulting \emph{probability flow ODE} offers appealing properties---such as defining a gradient flow of the Kullback--Leibler divergence between the current and target densities with respect to the 2-Wasserstein distance---it relies on evaluating the current probability density, which is intractable in most practical applications.
By closely examining the drawbacks of approximating this density via kernel density estimation, we uncover opportunities to turn these limitations into advantages in contexts such as variational inference, kernel mean embeddings, and sequential Monte Carlo.


%
%
%

	\thekeywords{sampling%
\and
Monte Carlo methods%
\and
quasi-Monte Carlo%
\and
Bayesian inference%
\and
super-root-$n$ convergence%
\and
kernel density estimation%
\and
kernel mean embedding%
\and
variational inference%
}
	\themsc{65C05
\and%
65D32
\and%
11K36 
\and
46E22
\and
62G05
\end{abstract}


\section{Introduction}
\label{section:Introduction}

Approximating a target probability density $\trd \colon \bR^{d} \to \bR_{\geq 0}$ is a fundamental challenge in Bayesian statistics and machine learning. The difficulties involved typically arise from high dimensionality, multimodality, and the fact that the normalizing constant of $\trd$ is often unknown. While Markov chain Monte Carlo (MCMC) methods \citep{hastings1970monte,meyn2009markov,robert2004monte} dominate the field of Bayesian inference, several powerful Monte Carlo methods employ a technique known as \emph{bridging}.
This approach connects an easily sampled reference probability distribution $\simpd$ with the target $\trd$ via a family of intermediate densities $(\rho_t)_{t \in [0,\tau]}$, $\tau \in [1, \infty]$, with $\rho_0 = \simpd$ and $\rho_\tau = \trd$ (in discrete cases, a finite sequence $(\rho_t)_{t = 0,\dots,\tau}$ is defined).

The intermediate densities may be predefined, as in sequential Monte Carlo methods \citep{Doucet2001SMC,delmoral2006smc,Chopin2020SMC}, annealed importance sampling \citep{neal2001annealed}, and path sampling \citep{gelman1998simulating}, or they can arise from perturbations or drifts that bring $\rho_t$ closer to $\trd$ over time (often minimizing the Kullback--Leibler divergence). This is the case for particle flow methods and particle-based variational inference \citep{liu2019understanding,wang2019accelerated,chen2018unified,chen2018stein,chen2019stein,liu2016stein}, as well as normalizing flows \citep{rezende2015variational}, neural ordinary differential equations \citep{chen2018neural}, and the methodology developed by \citet{tabak2010density,tabak2013family}.

We begin by focusing on particle flow methods, where the density $\den_t$ is approximated by an ensemble of particles $\bsX(t) = (X_1(t), \dots, X_J(t)) \in (\bR^d)^J$, that is, by the discrete distribution $J^{-1} \sum_{j=1}^{J} \delta_{X_j(t)}$, with $\delta_x$ denoting the Dirac measure at $x$.
Motivated by a reformulation of the Fokker--Planck equation (FPE) as a continuity equation, we introduce a deterministic particle dynamics known as the \emph{probability flow ODE} \citep{boffi2023probabilityflowsolutionfokkerplanck}. This ODE drives $\simpd$ exponentially toward $\trd$, but the velocity field governing this dynamics, $\vel_t^{\FP} = \nabla \log \frac{\trd}{\rho_t}$, requires knowledge of the current probability density $\rho_t$. Unfortunately, computing $\rho_t$ is typically infeasible in high dimensions, rendering direct solutions of the FPE impractical.
It is important to note that $\vel_t^{\FP}$ depends only on the gradient of $\log \trd$, meaning that $\trd$ does not need to be normalized. This feature is particularly valuable in Bayesian inference, where the normalizing constant of the posterior is often unknown.

Alternative approaches have been proposed in the literature \citep{liu2016stein,reich2015probabilistic,peyre2019computational}, one of which is to approximate $\rho_t$ via kernel density estimation (KDE):
\begin{equation}
	\label{equ:definition_KDE}
	\KDE_{\kappa}^{h}[\bsX] \defeq \frac{1}{J} \sum_{j=1}^{J} \kappa^{h}(x - X_j),
	\qquad
	\kappa^{h}(x) \defeq h^{-d} \kappa(h^{-1} x),
	\qquad
	\bsX = (X_1, \dots, X_J) \in (\bR^d)^J,
\end{equation}
where $\bsX = \bsX(t)$ are the current particle positions \citep{wang2019accelerated,liu2019understanding}. Here, $\kappa^{h} \in L^1(\bR^d; \bR_{\geq 0})$ is a probability density function acting as the kernel, and $h > 0$ is the bandwidth parameter.

While previous studies \citep{wang2019accelerated,liu2019understanding} emphasize bandwidth selection and the smoothing effect of KDE, they do not address the impact of this effect on the long-term particle positions, $\bsX(\infty) \coloneqq \lim_{t \to \infty} \bsX(t)$, which we term \emph{KDE points}. Specifically, these KDE points do not follow the desired target distribution $\trd$ but are instead overly concentrated, effectively sampling from a density $\check{\den}^{h}$ that satisfies $\check{\den}^{h} \ast \kappa^{h} \approx \trd$.

Contrary to initial expectations, these effects are not merely drawbacks but, in fact, reveal fundamental advantages that form the central contribution of this work:
\vspace{1ex}


\begin{enumerate}[label=(\Alph*)]
	\item
	\label{item:KDE_points_KDE_property}
	\colorbox{ourblue!8}{In the long term, the points $\bsX(t)$ arrange themselves such that
		$\KDE_{\kappa}^{h}[\bsX (\infty)] \approx \trd$.}
	\\[1.5ex]
	\textbf{Advantages:}
	\begin{itemize}
		\item
		Similar to variational inference \citep{blei2017variational}, having an analytic expression for the approximate target density enables addressing a wide range of practical problems, including model selection and prediction tasks.
		\item
		The approximate target $\KDE_{\kappa}^{h}[\bsX (\infty)]$ allows for efficient sampling, especially when simple kernels, commonly used in practice, are employed. Additionally, importance reweighting (\citealt[Section~5.7]{rubinstein2016simulation}; \citealt[Section~3.3]{robert2004monte}) can be applied to account for any approximation error.
		\item
		Since $\KDE_{\kappa}^{h}[\bsX (\infty)]$ defines a mixture distribution, more advanced sampling techniques, such as \emph{stratified sampling}, can be employed to reduce the variance in Monte Carlo estimators \citealt[Section~5.5]{rubinstein2016simulation}. Other techniques, such as (transported) Quasi Monte Carlo points or sparse grids, can be utilized to accelerate convergence rates \citep{Cui2023quasimonte,klebanov2023transporting}.
	\end{itemize}
	\vspace{1ex}
	\item
	\label{item:KDE_points_deconvolution_property}
	\colorbox{ouryellow!16}{The points $\bsX (\infty)$ are distributed according to $\check{\den}^{h}$, where $\check{\den}^{h} \ast \kappa^{h} \approx \trd$.}
	\\[1.5ex]
	\textbf{Advantages:}
	\begin{itemize}
		\item
		In certain applications, such as the inversion of kernel mean embeddings, samples from $\check{\den}^{h}$ are required rather than from $\trd$. Thus, by a fortunate coincidence, this method offers an elegant solution to an otherwise complex problem as a natural byproduct, with a notable application to resampling in sequential Monte Carlo methods \citep{doucet2000sequential,Doucet2001SMC,delmoral2006smc}.
		\item
		Empirical results indicate that the KDE points $\bsX (\infty)$, which are strongly correlated through the flow dynamics, tend to be evenly spaced and exhibit a super-root-$n$ convergence rate\footnote{Super-root-$n$ convergence refers to a rate faster than $O(n^{-1/2})$, where $n$ is the sample size, thus surpassing the speed of conventional Monte Carlo methods.}
		in corresponding Monte Carlo estimators for $\check{\den}^{h}$.
	\end{itemize}
\end{enumerate}

In summary, we shift our focus from directly solving the probability flow ODE to approximating the target distribution $\trd$ using the mixture distribution $\KDE_{\kappa}^{h}[\bsX(\infty)]$, which aligns with the principles of variational Bayes methods \citep{blei2017variational}. These insights can be applied in two ways, corresponding to observations \ref{item:KDE_points_KDE_property} and \ref{item:KDE_points_deconvolution_property} above and summarized in \Cref{table:application_areas}.
Specifically, one can either use the approximation $\KDE_{\kappa}^{h}[\bsX (\infty)] \approx \trd$ or directly leverage the KDE points $\bsX (\infty)$ themselves.

This paper is structured as follows. \Cref{section:RelatedWork} reviews related work, while \Cref{section:Notation} introduces the necessary notation.
\Cref{section:TransportAndFokkerPlanck} recaps well-established results on the continuity equation and FPE, identifying the velocity field $\vel_t^{\FP} = \nabla \log \frac{\trd}{\rho_t}$ as a deterministic dynamics that mimics the FPE.
\Cref{section:Approach} explores the effect of approximating $\rho_t$, which enters the velocity field's formulation, using a KDE based on the current particle positions, leading to the formulation of a mixture distribution approximation for $\trd$.
We explore several applications in \Cref{section:Applications} and demonstrate how these KDE effects can be beneficial in two key areas:
one where the KDE serves as an approximation of the target distribution $\trd$, and another where the KDE points $\bsX (\infty)$ are used directly.
\Cref{section:Annealing} explores how a simulated annealing-like method can improve the performance for multimodal target distributions. Finally, \Cref{section:Conclusion} concludes the paper, outlines unresolved problems, while the proofs of two key theorems are given in \Cref{appendix:Proofs}.


\section{Related Work}
\label{section:RelatedWork}

This work is closely related to particle-based variational inference methods. The velocity field considered here, $\vel_t^{\FP} = \nabla \log \frac{\trd}{\rho_t}$, has been explored in similar contexts by
\citet{liu2016stein,reich2015probabilistic,peyre2019computational,pathiraja2019discrete,Reich2021FPparticles} and others.
In our approach, the unknown density $\rho_t$ in the denominator is approximated via kernel density estimation (KDE) based on the particle ensemble, as seen in the work of \citet{wang2019accelerated} and \citet{liu2019understanding}.
However, KDE introduces its own challenges, which we address in \Cref{section:Approach}, ultimately turning these challenges to our advantage.

Unlike the aforementioned approaches, where the target density $\trd$ is approximated by the particles themselves in a Monte Carlo sense, our method focuses on approximating $\trd$ through the kernel density estimate $\KDE_{\kappa}^{h}[\bsX (\infty)]$, based on the final positions of the particles.
This connects our approach to variational inference frameworks \citep{jordan1999introduction,attias1999variational,blei2017variational} and shares similarities with certain hybrids of MCMC and variational Bayesian methods \citep{martino2017layered,schuster2021MCIS}.
This perspective opens up several possibilities for leveraging these points for sampling from the target distribution, including the use of transported quasi-Monte Carlo points \citep{Cui2023quasimonte,klebanov2023transporting}. 
Our approach identifies the ``deconvolved'' density $\check{\den}^{h}$, which describes the distribution of the final particles in a certain sense. This setup---sampling from $\check{\den}^{h}$ while evaluating $\trd$ and its gradient---bears resemblance to kernel herding \citep{chen2010super,Bach2012HerdingConditionalGradient,lacoste2015sequential} and Sequential Bayesian Quadrature \citep{huszar2012herding}, to which we compare our method.

A direct application of this method is the inversion of kernel mean embeddings \citep{smola2007embedding,berlinet2004rkhs,muandet2017kernel}, a crucial step in numerous machine learning tasks, as well as an efficient resampling strategy for sequential Monte Carlo methods \citep{doucet2000sequential,Doucet2001SMC,delmoral2006smc,Chopin2020SMC}.


\section{Preliminaries and Notation}
\label{section:Notation}

Throughout this paper, we assume that all probability densities $\rho, \rho_{t}, \trd, \kappa \in C^{\infty}(\bR^{d};\bR_{>0})$ are strictly positive and smooth, with both the densities and their gradients decaying sufficiently fast. This ensures that any velocity field of the form $v = \nabla \log \frac{\rho_{1}}{\rho_{2}}$ is well-defined and smooth, and that the corresponding continuity equation holds in a strong sense with a smooth solution at all times. While some results, particularly the continuity equation \eqref{equ:TransportEquation} in \Cref{section:TransportAndFokkerPlanck}, can be formulated in a weak sense within a more general setting \citep{ambrosio2008transport, ambrosio2008gradient, villani2003topics}, such an approach would introduce additional technical and notational complexity that lies outside the scope of this paper.

For simplicity, we avoid introducing the corresponding probability distribution $\bP_{\rho}$ for each density $\rho$, and instead, use the density symbol $\rho$ itself in place of $\bP_{\rho}$. For example, $X \sim \rho$ means $X \sim \bP_{\rho}$, and $T_{\#} \rho$ denotes the pushforward of $\bP_{\rho}$ under the transport map $T$.

We use the following notational conventions: For any $x \in \bR^{d}$, the Euclidean norm is denoted by $\norm{x} \defeq \norm{x}_{2}$. The closed ball of radius $\varepsilon \geq 0$ centered at $x \in \bR^{d}$ is denoted by $\Ball{x}{\varepsilon} \defeq \{ y \in \bR^{d} \mid \norm{x - y} \leq \varepsilon \}$.
For $d, d' \in \bN$ and a probability density $\rho \in L^{1}(\bR^{d})$, we define the weighted Lebesgue space $L_{\rho}^{2} = L_{\rho}^{2}(\bR^{d}; \bR^{d'})$ as the space of measurable functions $f\colon \bR^{d} \to \bR^{d'}$ such that $\norm{f}_{L_{\rho}^{2}} < \infty$, where the corresponding norm and inner product are defined by
\[
\norm{f}_{L_{\rho}^{2}} \defeq \innerprod{f}{f}_{L_{\rho}^{2}}^{1/2},
\quad
\innerprod{f}{g}_{L_{\rho}^{2}} \defeq \int_{\bR^{d}} \rho(x) f(x)^{\top} g(x) \, \rd x.
\]
The Kullback–Leibler divergence between two strictly positive and continuous probability densities $\rho_{1}, \rho_{2} \in C(\bR^{d}; \bR_{>0})$ is defined as
\[
\dkl(\rho_{1} \| \rho_{2}) \defeq \int_{\bR^{d}} \rho_{1}(x) \log\frac{\rho_{1}(x)}{\rho_{2}(x)} \, \mathrm dx \in [0, \infty].
\]
The convolution $f \ast \kappa$ and cross-correlation $f \star \kappa$ of two functions $f, \kappa\colon \bR^{d} \to \bR$ are given by
\[
f \ast \kappa \defeq \int_{\bR^{d}} f(x) \, \kappa(\quark - x) \, \mathrm dx,
\quad
f \star \kappa \defeq \int_{\bR^{d}} f(x) \, \kappa(x - \quark) \, \mathrm dx,
\]
whenever these expressions are defined. In many practical applications, $\kappa$ is symmetric, in which case the two operations coincide.
Finally, the kernel density estimate $\KDE_{\kappa}^{h}[\bsX]$ with respect to the points $\bsX = (X_1, \dots, X_J) \in (\bR^{d})^J$, the kernel $\kappa \in L^{1}(\bR^{d}; \bR_{\geq 0})$, and the bandwidth $h > 0$, is defined as in \eqref{equ:definition_KDE}.


\section{Continuity Equation and Fokker--Planck Equation}
\label{section:TransportAndFokkerPlanck}

It is well-known (\citealt[Proposition 2.1]{ambrosio2008transport}; \citealt[Proposition 8.1.8]{ambrosio2008gradient}; \citealt[Theorem 5.34]{villani2003topics}) that, if the dynamics of a particle with random initial position $X_\init$ is governed by the ordinary differential equation (ODE)
\begin{equation}
	\label{equ:ODE}
	\mathrm d X(t) = \vel_t(X(t))\, \mathrm dt,
	\qquad
	X(0) = X_\init \sim \den_\init \in C^{\infty}(\bR^{d}),
\end{equation}
where $\vel_t \in C^{\infty}(\R^d; \R^d)$ is a smooth, time-dependent velocity field,
the evolution of the corresponding probability density $\den_t$ of $X(t)$ is described by the continuity equation:
\begin{equation}
	\label{equ:TransportEquation}
	\partial_t \den_t
	=
	-\diver(\den_t \vel_t),
	\qquad
	\den_0 = \den_\init.
\end{equation}

Similarly, if the particle's motion is governed by an It{\^o} stochastic differential equation (SDE), often referred to as overdamped Langevin dynamics, of the form
\begin{equation}
	\label{equ:SDE}
	\mathrm d Y(t)
	=
	\nabla \log \trd (Y(t))\, \mathrm dt + \sqrt{2}\, \mathrm dW(t),
	\qquad
	Y(0) = Y_\init \sim \den_\init,
\end{equation}
where $\trd \in C^{\infty}(\bR^{d}; \bR_{>0})$ is a strictly positive target probability density,\footnote{The SDE \eqref{equ:SDE} is often expressed using the potential $\Psi \defeq - \log \trd$. However, we opt for the current form to reduce notational complexity, as $\trd$ is the stationary and limiting density of the dynamics.} and $W(t)$ is a $d$-dimensional Wiener process (diffusion), the evolution of the corresponding density $\den_t$ is governed by the Fokker-Planck equation (FPE; \citealt{risken1989fokker}; \citealt[Section 4]{pavliotis2014stochastic}):
\begin{equation}
	\label{equ:FokkerPlanckEquation}
	\partial_t \den_t
	=
	-\diver(\den_t \nabla\log \trd) + \Delta \den_t,
	\qquad
	\den_0 = \den_\init.
\end{equation}
It is well-known \citep[Theorem 4.4]{pavliotis2014stochastic} that, under fairly general conditions, the solution $\den_{t}$ of \eqref{equ:FokkerPlanckEquation} converges exponentially fast to the unique stationary density $\trd$.

One application of this result is a straightforward sampling technique: given a stationary density $\trd$, one can initialize samples from a smooth, easily sampled density $\den_\init$ and let them evolve according to the SDE \eqref{equ:SDE} for a ``reasonably long'' period of time. 
The resulting points will then be approximately $\trd$-distributed.

However, the time required for this process can be prohibitively long, especially when $\trd$ is multimodal, rendering this approach impractical for many real-world applications.

Interestingly, the Fokker-Planck equation \eqref{equ:FokkerPlanckEquation} can be rewritten as a continuity equation:
\begin{equation}
	\label{equ:FokkerAsContinuity}
	\partial_t \den_t
	=
	-\diver(\den_t \nabla \log \trd - \nabla \den_t)
	=
	-\diver(\den_t \vel_t^{\FP}),
	\qquad
	\vel_t^{\FP}
	\defeq
	\nabla \log \frac{\trd}{\rho_t},
\end{equation}
which leads to the following almost trivial result:
\begin{proposition}
	Let $\simpd, \trd \in C^{\infty}(\bR^{d}; \bR_{>0})$ be smooth, strictly positive probability densities. Then the density evolutions $(\rho_t)_{t\in[0,T]}$ of the SDE \eqref{equ:SDE} and the ODE \eqref{equ:ODE} with $v_t = v_t^{\FP} = \nabla \log \frac{\trd}{\rho_t}$ are identical.
\end{proposition}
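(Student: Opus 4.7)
The plan is essentially to observe that both density evolutions satisfy the same linear PDE with the same initial datum, and then invoke uniqueness. Concretely, I would proceed in three short steps.

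First, I would invoke the two standard facts already recalled in the excerpt: the density $\rho_{t}^{\text{SDE}}$ of the solution of the SDE \eqref{equ:SDE} solves the Fokker--Planck equation \eqref{equ:FokkerPlanckEquation}, while the density $\rho_{t}^{\text{ODE}}$ of the solution of the ODE \eqref{equ:ODE} with velocity field $v_{t} = v_{t}^{\FP}$ solves the continuity equation \eqref{equ:TransportEquation}. Both are initialized at $\rho_{\init}$, so it only remains to identify the right-hand sides.

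Second, I would reproduce the algebraic rewriting announced in \eqref{equ:FokkerAsContinuity}. Using strict positivity of $\rho_{t}$, the identity $\nabla \rho_{t} = \rho_{t} \nabla \log \rho_{t}$ gives
\[
\Delta \rho_{t} \;=\; \diver(\nabla \rho_{t}) \;=\; \diver(\rho_{t} \nabla \log \rho_{t}),
\]
so that the Fokker--Planck right-hand side becomes
\[
-\diver(\rho_{t} \nabla \log \trd) + \Delta \rho_{t}
\;=\;
-\diver\!\left(\rho_{t} \nabla \log \frac{\trd}{\rho_{t}}\right)
\;=\;
-\diver(\rho_{t} v_{t}^{\FP}).
\]
Thus $\rho_{t}^{\text{SDE}}$ solves exactly the continuity equation associated with the velocity field $v_{t}^{\FP}$, with initial condition $\rho_{\init}$.

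Third, I would conclude by uniqueness. Under the blanket smoothness and decay assumptions stated in \Cref{section:Notation}, both $\rho_{t}^{\text{SDE}}$ and $\rho_{t}^{\text{ODE}}$ are smooth classical solutions of the same continuity equation with smooth velocity field $v_{t}^{\FP}$ and a common smooth initial density $\rho_{\init}$; standard well-posedness for the continuity equation in the smooth regime (as referenced in \Cref{section:TransportAndFokkerPlanck}) then forces $\rho_{t}^{\text{SDE}} = \rho_{t}^{\text{ODE}}$ for all $t \in [0,T]$.

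The only mildly delicate point is a logical subtlety rather than a computational obstacle: the velocity field $v_{t}^{\FP} = \nabla \log(\trd/\rho_{t})$ depends on the unknown density itself, so strictly speaking the continuity equation driven by $v_{t}^{\FP}$ is nonlinear. I would handle this by reading the ODE interpretation in the natural \emph{fixed-point} sense used throughout the paper: the particle trajectories are evolved by the velocity field induced by the current law of $X(t)$, and substituting $\rho_{t}^{\text{SDE}}$ into $v_{t}^{\FP}$ yields a bona fide linear continuity equation that $\rho_{t}^{\text{SDE}}$ satisfies by Step~2, which then exhibits a valid pair $(X(t), \rho_{t}^{\text{SDE}})$ solving the coupled ODE/continuity system; uniqueness in the smooth regime identifies this solution with $\rho_{t}^{\text{ODE}}$.
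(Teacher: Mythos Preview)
Your proposal is correct and follows essentially the same route as the paper: the paper's proof is a one-liner (``the claim follows directly from \eqref{equ:TransportEquation}, \eqref{equ:FokkerPlanckEquation}, and \eqref{equ:FokkerAsContinuity}''), and your Steps~1--2 simply unpack that reference, while your Step~3 makes explicit the uniqueness appeal that the paper leaves implicit in the blanket assumptions of \Cref{section:Notation}. Your remark on the fixed-point reading of the self-referential velocity field is a welcome clarification that the paper glosses over, but it does not change the underlying argument.
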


\begin{proof}
	The claim follows directly from \eqref{equ:TransportEquation}, \eqref{equ:FokkerPlanckEquation}, and \eqref{equ:FokkerAsContinuity}.
\end{proof}

\begin{figure}[t]
	\centering
	\begin{subfigure}[b]{0.49\textwidth}
		\centering
		\includegraphics[width=\textwidth]{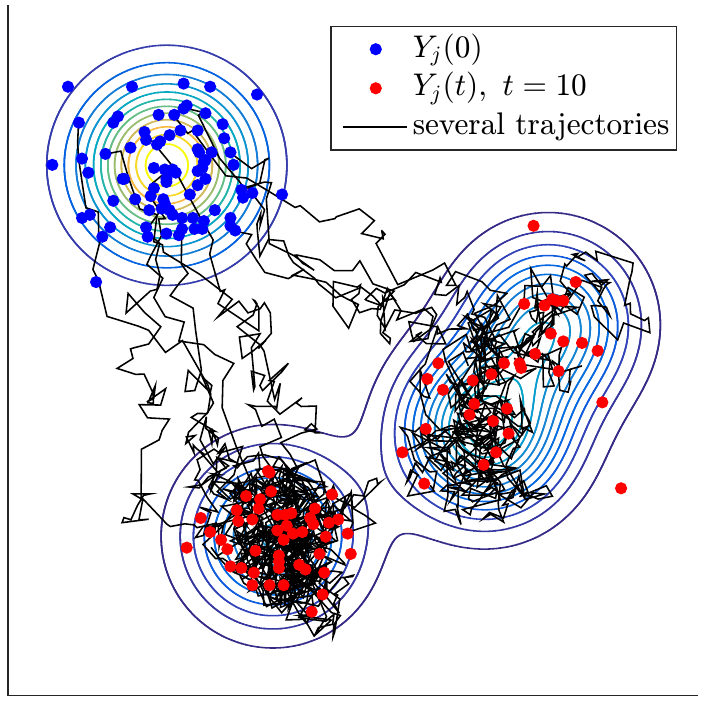}
	\end{subfigure}
	\hfill
	\begin{subfigure}[b]{0.49\textwidth}
		\centering
		\includegraphics[width=\textwidth]{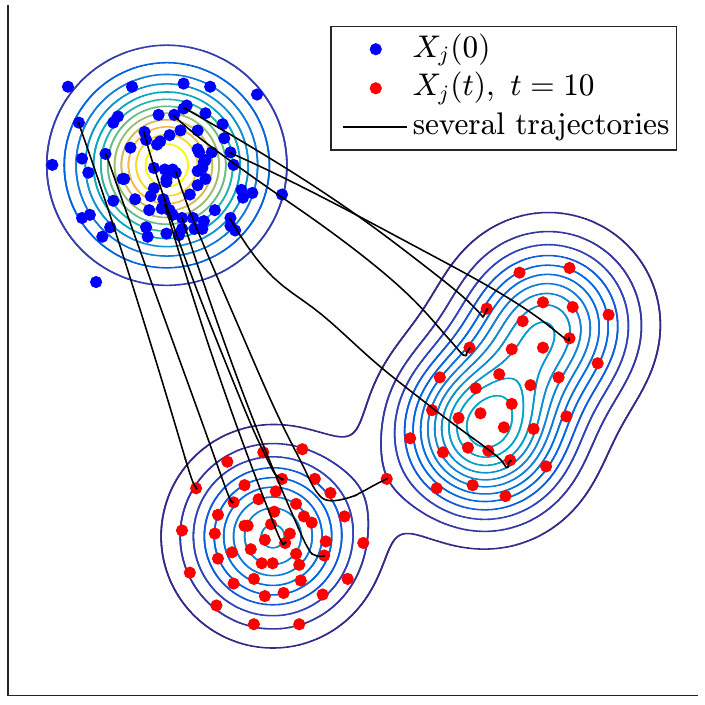}
	\end{subfigure}
	\caption{
		$\den_\init$-distributed samples (standard Gaussian) are transported to $\den_\fin$-distributed samples (a mixture of three Gaussian densities) by the SDE \eqref{equ:SDE} using the Euler--Maruyama discretization (left) and by the ODE \eqref{equ:ODE} with $\vel_t = \vel_t^{\FP}$ (right). 
		The densities of both methods coincide analytically for each $t \ge 0$. 
		Ten trajectories are shown in black. The density estimate required for evaluating $\vel_t^{\FP}$ was performed using kernel density estimation based on the current samples at each time step. 
		The implications of this approximation are discussed in detail in \Cref{section:Approach}. 
		Notably, as observed in the right plot, the final points are too ``concentrated'' to represent the target density $\trd$ and exhibit more regularity than independent samples, particularly near the outer regions.
	}
	\label{fig:SDEvsODE}
\end{figure}

\begin{remark}
	It is important to note that while the evolution of the densities is identical, the trajectories of the SDE \eqref{equ:SDE} and the ODE \eqref{equ:ODE} are fundamentally different. 
	In particular, the trajectories of the ODE will be significantly smoother than those of the SDE due to the absence of the ``dithering'' effect induced by the Wiener process $W(t)$, as illustrated in Figure \ref{fig:SDEvsODE}.
\end{remark}

From a numerical perspective, the ODE offers the advantage of allowing for considerably larger time steps when approximating the trajectories. 
However, the careful reader will notice that the velocity field $\vel_t^{\FP}$ constructed in \eqref{equ:FokkerAsContinuity} depends on the computation of $\den_t$ and its gradient, both of which are challenging and computationally expensive.
Directly solving the continuity equation \eqref{equ:TransportEquation} or the Fokker-Planck equation \eqref{equ:FokkerPlanckEquation} alongside the ODE is impractical for most real-world applications. 
This issue can be circumvented by running a large ensemble of trajectories $X_{j}(t)$, where $X_{j}(0) \stackrel{\iid}{\sim} \den_\init$, using \eqref{equ:ODE}, and then approximating the density $\den_t$ from the current particle positions.

However, estimating densities from samples, especially in high-dimensional settings, introduces its own set of challenges. 
Furthermore, this procedure induces correlations between the initially independent particles $X_{j}(t)$, complicating the analysis. 
This phenomenon often manifests as particle repulsion, as seen in Figure \ref{fig:SDEvsODE} (right), where the final particle positions exhibit a somewhat regular structure.
While \citet[Section~5.6]{reich2015probabilistic} suggest using Gaussian densities to approximate $\vel_t^{\FP}$, \citet[Remark~9.12]{peyre2019computational} propose using distances to nearest neighbors, and \citet{wang2019accelerated} and \citet{liu2019understanding} recommend kernel density estimation (KDE). 
However, the smoothing effect of KDE is so pronounced that the estimated velocity field can no longer be considered a reliable approximation of $\vel_t^{\FP}$.
We explore these issues and their consequences in detail in \Cref{section:Approach}, where our focus shifts from approximating the Fokker-Planck dynamics to approximating $\trd$ using a mixture distributions.

Before continuing with this discussion, we first present some theoretical properties of the Fokker-Planck equation in the following subsection.

\subsection{FPE as a gradient flow of the Kullback--Leibler divergence}
\label{section:FPE_as_gradient_flow}

In the seminal paper by \citet{jordan1998variational}, the authors demonstrate that the Fokker--Planck equation describes a gradient flow of the Kullback--Leibler divergence\footnote{In the original paper, the free energy functional $F(\rho)=\int \Psi(x)\, \rho(x)\, \mathrm dx + \int\rho(x)\, \log\rho(x)\, \mathrm dx$ is used. This coincides with $\dkl (\rho\|\trd)$ when $\trd \propto \exp(-\Psi)$.} $\dkl (\rho_{t}\|\trd)$ between the current density $\rho_{t}$ and the target density $\trd$, with respect to the 2-Wasserstein distance
\begin{equation}
	\label{equ:WassersteinDistance}
	W_2(\rho_1,\rho_2) = \min \{ C_{\rho_1}(T) \mid T\colon\bR^d\to\bR^d,\, T_\#\rho_1 = \rho_2 \},
	\qquad
	C_{\rho}(T)
	:=
	\| T - \Id \|_{L^2_\rho},
\end{equation}
where $\rho_1,\rho_2$ are probability densities on $\bR^d$.
This result provided a new impetus for the development of the theory of gradient flows and geodesic flows in spaces of probability measures \citep{ambrosio2008gradient,villani2003topics,santambrogio2015optimal}, thereby linking optimal transport, partial differential equations, and Riemannian geometry.

In light of \eqref{equ:FokkerAsContinuity}, this implies that the velocity field $v_{t} = \vel_t^{\FP} = \nabla\log \frac{\trd}{\rho_t}$ in the ODE \eqref{equ:ODE} realizes a gradient flow in this sense, as expressed in the following proposition.\footnote{Although this seems to be a well-established result, the author is not aware of a specific reference for this statement.}

\begin{proposition}
	\label{prop:KL_change}
	Let $\rho_t, \trd \in C^{\infty}(\R^d;\bR_{>0})$, $t \geq 0$, be strictly positive and smooth probability densities, and let $(\rho_{t})_{t \geq 0}$ satisfy the continuity equation \eqref{equ:TransportEquation} for a family of vector fields $v_{t} \in C^{\infty}(\R^d; \bR^d)$. 
	Assume that $\rho_{t}v_{t}\log\frac{\rho_{t}}{\trd}$ decays sufficiently fast:
	\[
	\rho_{t}(x) \, \norm{v_{t}(x)} \, \Absval{\log\frac{\rho_{t}(x)}{\trd(x)}} = o(\norm{x}^{1-d}).
	\]
	Then, for any $t \geq 0$,
	\[
	\tfrac{\mathrm d}{\mathrm dt} \dkl(\rho_t \| \trd)
	=
	- \innerprod{v_{t}}{\vel_t^{\FP}}_{L^2_{\rho_t}},
	\qquad
	\vel_t^{\FP}
	=
	\nabla\log \frac{\trd}{\rho_t}.
	\]
\end{proposition}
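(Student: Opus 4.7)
The plan is to directly differentiate the Kullback--Leibler divergence under the integral sign and then apply integration by parts, using the continuity equation \eqref{equ:TransportEquation} to rewrite $\partial_t \rho_t$. The decay assumption is present precisely to justify the vanishing boundary terms and the interchange of differentiation and integration, so most of the work is bookkeeping rather than any delicate analytic argument.

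First I would write
\[
\tfrac{\rd}{\rd t}\dkl(\rho_t\|\trd)
= \tfrac{\rd}{\rd t}\int \rho_t(x)\log\tfrac{\rho_t(x)}{\trd(x)}\,\rd x
= \int (\partial_t \rho_t)\log\tfrac{\rho_t}{\trd}\,\rd x + \int \partial_t \rho_t\,\rd x,
\]
where the second term comes from differentiating the $\log\rho_t$ factor and using $\rho_t\cdot(\partial_t\rho_t)/\rho_t=\partial_t\rho_t$. Conservation of mass (or, equivalently, integrating the continuity equation and applying the divergence theorem under the decay hypothesis) kills this second term. Then I would substitute $\partial_t \rho_t = -\diver(\rho_t v_t)$ from \eqref{equ:TransportEquation}, giving
\[
\tfrac{\rd}{\rd t}\dkl(\rho_t\|\trd)
= -\int \diver(\rho_t v_t)\,\log\tfrac{\rho_t}{\trd}\,\rd x.
\]

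Next, I would integrate by parts to move the divergence onto the logarithm. This yields
\[
\tfrac{\rd}{\rd t}\dkl(\rho_t\|\trd)
= \int \rho_t\, v_t \cdot \nabla\log\tfrac{\rho_t}{\trd}\,\rd x
= -\int \rho_t\, v_t \cdot \nabla\log\tfrac{\trd}{\rho_t}\,\rd x
= -\innerprod{v_t}{\vel_t^{\FP}}_{L^2_{\rho_t}},
\]
which is the desired identity. The key place where the decay hypothesis is used is precisely here: the boundary term from integration by parts involves the flux of $\rho_t v_t \log(\rho_t/\trd)$ through a sphere of radius $R$, whose surface area grows like $R^{d-1}$, so the assumption $\rho_t(x)\|v_t(x)\| |\log(\rho_t(x)/\trd(x))| = o(\|x\|^{1-d})$ is exactly what forces this boundary term to vanish as $R\to\infty$.

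The main (and really only) obstacle is ensuring that all the differentiations, integrations by parts, and the interchange of $\partial_t$ with $\int\,\rd x$ are rigorously justified under the stated smoothness and decay hypotheses; this is routine once one unpacks the asymptotic condition, but it is worth stating explicitly so that the reader can verify that no hidden regularity is being assumed. No cleverness beyond integration by parts is required.
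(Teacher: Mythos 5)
Your proposal is correct and follows essentially the same route as the paper's own proof: differentiate under the integral, note that $\int_{\bR^d}\partial_t\rho_t = 0$ by conservation of mass, substitute the continuity equation, and integrate by parts, with the stated decay condition precisely controlling the flux term through large spheres (surface area $\sim R^{d-1}$) so that the boundary contribution vanishes. The only difference is that you spell out these routine justifications in slightly more detail than the paper does.
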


\begin{proof}
	Note that $\int_{\bR^{d}} \partial_{t} \rho_{t} = 0$ since $\rho_{t}$ is a probability density for each $t \geq 0$.
	By invoking \eqref{equ:TransportEquation} and integrating by parts, we obtain
	\[
	\tfrac{\mathrm d}{\mathrm dt} \dkl(\rho_t \| \trd)
	=
	-\int_{\bR^{d}} \diver(\rho_t v_{t}) \log\frac{\rho_t}{\trd}
	=
	-\int_{\bR^{d}} \rho_t \, v_{t}^{\intercal} \nabla \log\frac{\trd}{\rho_t}
	=
	- \innerprod{v_{t}}{\vel_t^{\FP}}_{L^2_{\rho_t}}.
	\]
\end{proof}

\section{The Effects of Kernel Density Estimation}
\label{section:Approach}

\citet{wang2019accelerated} and \citet{liu2019understanding} propose approximating the velocity field $v_{t} = v_{t}^{\FP}$ in the ODE \eqref{equ:ODE} using a kernel density estimate (KDE; \citealt{parzen1962estimation, silverman1986density, scott2015multivariate}) based on the current samples $\bsX(t) = (X_1(t), \dots, X_J(t)) \in (\bR^d)^J$. 
These samples are themselves propagated by the ODE, resulting in the following dynamics:
\begin{equation}
	\label{equ:ODE_with_KDE}
	\dot X_j(t)
	=
	\hat v_t^{h}(X_j(t)),
	\qquad
	\hat{\vel}_{t}^{h}
	\defeq
	\nabla \log \frac{\den_{\fin}}{\hat\den_t^{h}},
	\qquad
	\hat{\rho}_{t}^{h}
	\defeq
	\KDE_{\kappa}^{h}[\bsX (t)].
\end{equation}
This creates an interplay where:
\begin{itemize}
	\item The current samples are used to estimate $\hat{\rho}_{t}^{h}$, an approximation of the current probability density $\rho_t$,
	\item This estimate $\hat{\rho}_{t}^{h}$ is then employed to approximate the velocity field, which further propagates the samples.
\end{itemize}
This approach avoids the need to solve the high-dimensional PDEs \eqref{equ:TransportEquation} or \eqref{equ:FokkerPlanckEquation}, which is computationally infeasible in most practical situations.

However, it is important to note that KDE is not a consistent estimator in the strict sense, as it introduces a smoothing effect. 
As the sample size increases ($J \to \infty$), the KDE does not converge to the true density $\rho$, but rather to a smoothed version of it. 
Specifically, if $X_1,\dots,X_J \stackrel{\textup{i.i.d.}}{\sim} \rho$, then by the law of large numbers, we observe the pointwise convergence:
\begin{equation}
	\label{equ:SmoothingEffectKDE}
	\KDE_{\kappa}^{h}[\bsX] \xrightarrow{J \to \infty} \rho \ast \kappa^{h}.
\end{equation}

To the best of our knowledge, the consequences of the KDE smoothing effect on the asymptotic particle positions $\bsX(t)$ as $t \to \infty$ have not been fully explored, let alone utilized. 
To understand the implications of approximating $\rho_t$ using KDE, we begin by outlining some intuitive observations regarding the advantages and disadvantages of this approach:
\begin{itemize}
	\item[+] The density estimate and its gradient are computationally efficient and straightforward to calculate.
	\item[$-$] Due to the aforementioned smoothing effect, KDE may provide a poor approximation of $\rho_t$, particularly in high-dimensional settings.
	\item[+] In the long run ($t \to \infty$), the sample points arrange themselves such that $\hat{\den}_{t}^{h} \approx \trd$. 
	More precisely, the corresponding score $\nabla \log \hat{\den}_{t}^{h}$ interpolates the target score $\nabla \log \trd$ at the final points $\bsX(\infty)$, as for any $x \in \bR^{d}$,
	\begin{equation}
		\label{equ:score_interpolation}
		\hat v_t^{h}(x) = 0
		\quad \Longleftrightarrow \quad
		\nabla \log \hat{\den}_{t}^{h}(x)
		=
		\nabla \log \trd(x).
	\end{equation}
	In other words, the particles become stationary if and only if this interpolation property is satisfied.	
	Thus, we achieve an approximate representation of the target density $\trd$ as a mixture distribution, which is beneficial for various applications (see \Cref{section:Applications}).
	This bears some resemblance to score matching: The points $\bsX(\infty)$ are arranged in such a way that the score of their KDE matches the true score at the points themselves.
\end{itemize}

The smoothing effect \eqref{equ:SmoothingEffectKDE} and the approximation property \eqref{equ:score_interpolation} have two additional implications for the long-term behavior ($t \to \infty$):
\begin{itemize}
	\item[$\pm$] 
	The sample points are not distributed according to $\trd$, but rather according to a modified density $\check{\den}^{h}$, where $\check{\den}^{h} \ast \kappa^{h} \approx \trd$. 
	This results in the samples being more concentrated than they would be if drawn directly from $\trd$. 
	While this is disadvantageous for tasks requiring direct sampling from $\trd$, it has valuable applications in deconvolution problems that arise when using kernel mean embeddings, as discussed in \Cref{section:Outbedding}. 
	This approach also serves as an alternative to kernel herding \citep{chen2010super, lacoste2015sequential} and kernel conditional density operators \citep{schuster2020kcdo}.	
	\item[$+$] 
	The resulting points exhibit a more regular spatial distribution compared to independent samples (see \Cref{fig:SDEvsODE}).
	This regularity suggests the potential for super-root-$n$ convergence of Monte Carlo estimators with respect to $\check{\den}^{h}$ based on these points. 
	This phenomenon has been confirmed empirically (see \Cref{section:Convolution_Sampling}).
\end{itemize}

These observations suggest shifting the focus from accurately solving the ODE \eqref{equ:ODE} to using the approximate velocity field $\hat{\vel}_{t}^{h}$ to find (essentially deterministic) points $X_1(\infty), \dots, X_J(\infty)$, whose kernel density estimate $\KDE_{\kappa}^{h}[\bsX(\infty)]$ provides a good approximation of $\trd$. 
This approach has several practical applications, which are discussed in \Cref{section:Applications}.

It is important to emphasize that, although KDE typically performs poorly in high-dimensional settings, this limitation does not affect this new methodology, since the objective is no longer to estimate a density from given samples, but rather to generate points whose KDE provides an effective approximation of $\trd$.

Whether the particles driven by \eqref{equ:ODE_with_KDE} actually converge to fixed positions $X_1(\infty), \dots, X_J(\infty)$ remains an open question. 
In this paper, we assume this convergence. 
This assumption, along with the observation in \eqref{equ:score_interpolation}, motivates the following definition:

\begin{definition}
	Pairwise distinct points $\bsX = (X_1, \dots, X_J) \in (\bR^{d})^{J}$, $d,J \in \bN$, are called \emph{KDE points} with respect to the density $\trd \in C^{1}(\bR^{d}; \bR_{> 0})$, the kernel $\kappa \in C^{1}(\bR^{d}; \bR_{> 0})$, and the bandwidth $h > 0$, if $\hat{\vel}^{h}(X_j) = 0$ for each $j = 1, \dots, J$, where
	\begin{equation}
	\label{equ:FP_KDE_dynamics}
	\hat{\vel}^{h}
	\defeq
	\nabla \log \frac{\trd}{\KDE_{\kappa}^{h}[\bsX]}.	
	\end{equation}
\end{definition}

In addition to the observation in \eqref{equ:score_interpolation}, we now present a result showing that the procedure in \eqref{equ:ODE_with_KDE} reduces the Kullback--Leibler divergence $\dkl(\hat{\rho}_{t}^{h} \| \trd)$ between the kernel density estimate and the target at each time step, provided the bandwidth $h$ of the kernel $\kappa^{h}$ is sufficiently small.
\begin{theorem}
\label{theorem:Reduction_KL_KDE}
Let $d\in\bN$, $\trd,\kappa \in C^{\infty}(\bR^{d};\bR_{>0})$ be strictly positive probability density functions and, for $h>0$, denote $\kappa^{h}(x) \defeq h^{-d} \kappa(h^{-1}x)$, $x\in\bR^{d}$.
Further, let $X_{1},\dots,X_{J} \in \bR^{d}$, $J\in\bN$, be arbitrary points and consider the dynamics given by \eqref{equ:ODE_with_KDE} with $X_j(0) = X_{j}$.
Assume that
\begin{equation}
\label{equ:technical_decay_condition_for_prop}
\kappa(x) \, \Absval{ \log \frac{\hat{\den}_{0}^{h}(x)}{\trd(x)} } = o(\norm{x}^{1-d}),
\end{equation}
and that, for any $\delta,\varepsilon > 0$ and $y\in\bR^{d}$ there exist $h_{0}>0$ such that, for any $0<h\leq h_{0}$, the velocity field $\hat{v}_{0}^{h}$ satisfies
\begin{equation}
\label{equ:technical_decay_condition}
\int_{\bR^{d}\setminus \Ball{0}{\varepsilon}} \kappa^{h}(x) \, \norm{\hat{v}_{0}^{h}(y + x)}\, \mathrm{d} x
\leq
\delta.
\end{equation}
Then there exists $h_{\ast}>0$ such that, for every $0<h\leq h_{\ast}$,
\[
\tfrac{\mathrm{d}}{\mathrm{d}t}\big|_{t=0} \dkl(\hat{\rho}_{t}^{h} \| \trd)
\leq
0.
\]
\end{theorem}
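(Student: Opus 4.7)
The plan is to exploit the structural observation that the kernel density estimate $\hat{\rho}_t^h$ is the convolution of the empirical particle measure $\mu_t := J^{-1}\sum_j \delta_{X_j(t)}$ with $\kappa^h$. Under the ODE in \eqref{equ:ODE_with_KDE}, $\mu_t$ satisfies $\partial_t \mu_t + \diver(\mu_t\, \hat{\vel}_t^h) = 0$ in the distributional sense, and convolving this identity with the smooth kernel $\kappa^h$ (which commutes with differentiation) yields a bona fide smooth continuity equation for $\hat{\rho}_t^h$ itself,
\[
\partial_t \hat{\rho}_t^h = -\diver(\hat{\rho}_t^h\, w_t),
\qquad
w_t(x) := \frac{1}{\hat{\rho}_t^h(x)} \cdot \frac{1}{J}\sum_{j=1}^J \kappa^h(x - X_j(t))\, \hat{\vel}_t^h(X_j(t)),
\]
with the smoothed velocity $w_t \in C^\infty$ since $\hat{\rho}_t^h > 0$ everywhere. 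This brings us back into the setting of \Cref{prop:KL_change}.

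I would then invoke \Cref{prop:KL_change} with $\rho_t = \hat{\rho}_t^h$ and $v_t = w_t$, noting that here $v_t^{\FP} = \hat{\vel}_t^h$. The decay assumption of that proposition at $t=0$ is supplied by \eqref{equ:technical_decay_condition_for_prop} together with the decay of $\kappa$. After the substitution $x = X_j + y$ in the resulting $L_{\hat{\rho}_0^h}^2$ inner product, this yields
\[
\tfrac{\mathrm d}{\mathrm dt}\big|_{t=0} \dkl(\hat{\rho}_t^h \| \trd)
=
-\frac{1}{J}\sum_{j=1}^J \hat{\vel}_0^h(X_j)^\top \int_{\bR^d} \kappa^h(y)\, \hat{\vel}_0^h(X_j + y)\, \rd y,
\]
so the task reduces to showing the right-hand sum is non-negative for small $h$.

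The strategy for this last, technical step is to show that as $h \to 0$, both $\hat{\vel}_0^h(X_j)$ and the smoothed value $\int \kappa^h(y)\, \hat{\vel}_0^h(X_j + y)\,\rd y$ converge to the same limit $\nabla\log\trd(X_j)$, making each summand approach $\|\nabla\log\trd(X_j)\|^2 \geq 0$. For the pointwise factor, the contributions of $X_k$ with $k \neq j$ to $\nabla\log\hat{\rho}_0^h(X_j)$ vanish exponentially by decay of $\kappa$, and the $k=j$ contribution vanishes by symmetry of $\kappa$. For the integral factor, condition \eqref{equ:technical_decay_condition} truncates the tail to arbitrarily small error; on the remaining ball $\Ball{0}{\varepsilon}$ the rescaling $y = hz$ reveals an $O(h^{-1})$ singularity $-\nabla\log\hat{\rho}_0^h(X_j + hz) \sim -h^{-1}\nabla\log\kappa(z)$ in $\hat{\vel}_0^h$, which however integrates against $\kappa$ to $-h^{-1}\int \nabla\kappa = 0$ by decay of $\kappa$ at infinity, while the regular piece $\nabla\log\trd(X_j + hz)$ passes to $\nabla\log\trd(X_j)$.

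The main obstacle is exactly this $O(h^{-1})$ singularity of $\hat{\vel}_0^h$ near each particle: the naive ``$\kappa^h$ concentrates at $0$'' reasoning is unavailable, and one must rely on the exact cancellation $\int\nabla\kappa = 0$ combined with the uniform tail bound \eqref{equ:technical_decay_condition} to take the limit rigorously and select a single $h_* > 0$ that works simultaneously for all $j = 1,\dots,J$.
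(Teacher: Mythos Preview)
Your derivation of the continuity equation for $\hat{\rho}_t^h$ with the effective velocity $w_t$, and the reduction via \Cref{prop:KL_change} to
\[
-\tfrac{\mathrm d}{\mathrm dt}\big|_{t=0} \dkl(\hat{\rho}_t^h \| \trd)
=
\frac{1}{J}\sum_{j=1}^J \hat{\vel}_0^h(X_j)^\top \int_{\bR^d} \kappa^h(y)\, \hat{\vel}_0^h(X_j + y)\, \rd y,
\]
is exactly what the paper does. The approaches diverge only in how this sum is shown to be non-negative.

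The paper does \emph{not} send $h \to 0$. At fixed $h$ it uses continuity of $\hat{\vel}_0^h$ to find $\varepsilon > 0$ with $\hat{\vel}_0^h(X_j)^\top\hat{\vel}_0^h(X_j + x) \geq \tfrac{1}{2}\norm{\hat{\vel}_0^h(X_j)}^2$ for all $x \in \Ball{0}{\varepsilon}$, then splits each integral into this ball (contributing at least $\tfrac{1}{4}\norm{\hat{\vel}_0^h(X_j)}^2$ once $\int_{\Ball{0}{\varepsilon}}\kappa^h \geq \tfrac12$) and its complement (bounded via \eqref{equ:technical_decay_condition} with $\delta = A/(4B)$, where $A = J^{-1}\sum_j\norm{\hat{\vel}_0^h(X_j)}^2$ and $B = J^{-1}\sum_j\norm{\hat{\vel}_0^h(X_j)}$). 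Summing over $j$ gives exactly $A/4 - AB/(4B) = 0$. The point is that the smoothed velocity is compared to $\hat{\vel}_0^h(X_j)$ \emph{itself}, not to any limiting object.

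Your limiting strategy has two genuine gaps. First, the theorem does not assume $\kappa$ is symmetric (nor that the $X_j$ are pairwise distinct); without $\nabla\kappa(0)=0$ the $k=j$ self-term gives $\nabla\log\hat{\rho}_0^h(X_j) \sim h^{-1}\nabla\kappa(0)/\kappa(0)$, so $\hat{\vel}_0^h(X_j)$ blows up rather than converging to $\nabla\log\trd(X_j)$. Second, and more structurally: even granting every convergence you claim, each summand tending to $\norm{\nabla\log\trd(X_j)}^2 \geq 0$ does \emph{not} yield non-negativity for small positive $h$. If every $X_j$ happens to be a critical point of $\trd$ the limit is zero, and a sequence converging to zero need not be eventually non-negative---so no $h_\ast$ can be extracted. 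The paper's comparison to $\hat{\vel}_0^h(X_j)$ sidesteps this entirely, because the near-ball contribution is a manifestly non-negative multiple of $\norm{\hat{\vel}_0^h(X_j)}^2$, whatever that value may be.
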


\begin{proof}
The proof is provided in \Cref{appendix:Proofs}.
\end{proof}

\begin{remark}
	\label{remark:Remark_on_reduction_KL_KDE}
	\begin{itemize}
		\item
		The technical conditions \eqref{equ:technical_decay_condition_for_prop} and \eqref{equ:technical_decay_condition} imposes decay requirements on $\kappa$ and its gradient $\nabla \kappa$ in dependence of the target density $\trd$. 
		For instance, if $\nabla \log \trd \in L_{\kappa}^{1}$ and $\kappa(x) \, \absval{ \log \trd(x) } = o(\norm{x}^{1-d})$, the conditions are satisfied by commonly used kernels such as the Gaussian kernel.
		\item
		This result is not specific to the initial time point $t = 0$; the statement holds for any $t \geq 0$. 
		However, the value of $h_{\ast} > 0$ depends on the current positions $X_1, \dots, X_J$ and may vary over time. 
		A uniform value of $h_{\ast} > 0$ for all times may not exist, which is why, in the long term, \eqref{equ:score_interpolation} provides a better characterization of how the particles settle.
		\item
		The particles $X_j(t)$ are driven by the velocity field $\hat{\vel}_{t}^{h}$, which is the optimal velocity field for $\hat{\rho}_{t}^{h}$ in the sense of a gradient flow of the Kullback--Leibler divergence $\dkl(\hat{\rho}_{t}^{h} \| \trd)$ with respect to the 2-Wasserstein metric (see \Cref{section:FPE_as_gradient_flow}). 
		Practically, however, $\hat{\rho}_{t}^{h}$ itself cannot be evolved by $\hat{\vel}_{t}^{h}$ because we only transport the samples $X_j(t)$, whose distribution differs from $\hat{\rho}_{t}^{h}$. 
		Instead, $\hat{\den}_{t}^{h}$ evolves under a suboptimal velocity field $w_{t}^{h}$ as defined in \eqref{equ:perturbed_velocity_field}. 
		Nevertheless, we show in the proof that $\dkl(\hat{\den}_{t}^{h} \| \trd)$ is still reduced, as $\innerprod{w_{t}^{h}}{\hat{\vel}_{t}^{h}}_{L_{\hat{\rho}_{t}}^{2}} \geq 0$ for sufficiently small $h$.
	\end{itemize}
\end{remark}

One might object that the dependence of $h_{\ast}>0$ on the points $X_1, \dots, X_J$, and thus on $J$, renders the result less useful. 
For any given $J$ (and choice of points $X_1, \dots, X_J$), the value of $h_{\ast}>0$ may become too small, leading to an ``undersmoothed'' density $\hat\den_t^{h}$ with $J$ distinct peaks.
While increasing $J$ can mitigate this issue for a fixed bandwidth $h$, it may be necessary to decrease $h$ as $J$ increases, which complicates the situation.
To address this drawback, we present a second result that handles the limiting case of an infinite ensemble size, where $J \to \infty$ and the initial particles $X_1, \dots, X_J$ are replaced by an arbitrary initial density $\rho$:

\begin{theorem}
	\label{theorem:Reduction_KL_KDE_infinite_ensemble}
	Let $d\in\bN$, $\rho,\trd,\kappa \in C^{\infty}(\bR^{d};\bR_{>0})$ be strictly positive probability density functions and, for $h>0$, denote $\kappa^{h}(x) \defeq h^{-d} \kappa(h^{-1}x)$, $x\in\bR^{d}$.	
	Consider the dynamics given by
	\begin{equation}
		\label{equ:Infinite_ensemble_dynamics}
		\dot X(t)
		=
		\overline{\vel}_{t}^{h}(X(t)),
		\qquad
		X(0) \sim \rho,
		\qquad
		\overline{\vel}_{t}^{h}
		\defeq
		\nabla \log \frac{\den_{\fin}}{\overline{\den}_{t}^{h}},
		\qquad
		\overline{\den}_{t}^{h}
		\defeq
		\den_t \ast \kappa^{h},
	\end{equation}
	where $\rho_{t}$ denotes the probability density of $X(t)$.
	Let the following assumptions hold:
	\begin{enumerate}[label = (\roman*)]
	\item \label{item:compact_set}
	For any $\delta,\varepsilon > 0$ and $y\in\bR^{d}$ there exist $h_{1}>0$ such that, for any $0<h\leq h_{1}$, the velocity field $\overline{v}_{0}^{h}$ satisfies
	\begin{equation*}		
		\int_{\bR^{d}\setminus \Ball{0}{\varepsilon}} \kappa^{h}(x) \, \norm{\overline{v}_{0}^{h}(y + x)}\, \mathrm{d} x
		\leq
		\delta.
	\end{equation*}	
	\item \label{item:Cross_Correlation_in_L2}
	For any $\delta > 0$, there exist $h_{2}>0$ and a compact set $\cK \subseteq \bR^{d}$ such that, for any $0 \leq h \leq h_{2}$, the velocity field $\overline{\vel}_{0}^{h}$ satisfies
	\[
	\int_{\bR^{d}\setminus \cK} \rho(y) \, \norm{(\overline{\vel}_{0}^{h} \star \kappa^{h})(y)}^{2}\, \mathrm dy
	\leq
	\delta,
	\]
	where we set $\rho \ast \kappa^{0}\defeq \rho$ and $v\star \kappa^{0} \defeq v$
	(in particular, $\overline{\vel}_{0}^{h},\overline{\vel}_{0}^{h}\star \kappa^{h}\in L_{\rho}^{2}$).
	\item \label{item:L_1_condition}
	$\rho \hat{v}_{0}^{h}, \, \diver (\rho \hat{v}_{0}^{h}) \in L^{1}(\bR^{d})$ and $ \norm{ \big((\rho \overline{\vel}_{0}^{h})\ast \kappa\big)(x) } \absval{ \log \frac{\overline{\den}_{0}^{h}(x)}{\trd(x)} } = o(\norm{x}^{1-d}) $.
	\end{enumerate}	
	Then there exists $h_{\ast}>0$ such that, for any $0<h\leq h_{\ast}$, 
	\[
	\tfrac{\mathrm{d}}{\mathrm{d}t}\big|_{t=0} \dkl(\overline{\den}_{t}^{h} \| \trd)
	\leq
	0.
	\]
\end{theorem}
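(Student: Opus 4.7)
The plan is to identify the effective velocity field governing the evolution of the smoothed density $\overline{\den}_{t}^{h} = \rho_{t} \ast \kappa^{h}$ and then invoke \Cref{prop:KL_change}. Since the trajectories $X(t)$ are driven by $\overline{v}_{t}^{h}$, the density $\rho_{t}$ obeys the continuity equation $\partial_{t}\rho_{t} = -\diver(\rho_{t}\, \overline{v}_{t}^{h})$ from \eqref{equ:TransportEquation}. Convolution with $\kappa^{h}$ commutes with both $\partial_{t}$ and $\diver$, so
\begin{equation*}
\partial_{t}\overline{\den}_{t}^{h}
=
-\diver\bigl((\rho_{t}\, \overline{v}_{t}^{h}) \ast \kappa^{h}\bigr)
=
-\diver(\overline{\den}_{t}^{h}\, w_{t}^{h}),
\qquad
w_{t}^{h} \defeq \frac{(\rho_{t}\, \overline{v}_{t}^{h}) \ast \kappa^{h}}{\overline{\den}_{t}^{h}}.
\end{equation*}
Thus $\overline{\den}_{t}^{h}$ satisfies a continuity equation with an effective velocity field $w_{t}^{h}$ that crucially \emph{differs} from the gradient $\overline{v}_{t}^{h}$; this discrepancy is the whole source of the technical difficulty.

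Next, I would apply \Cref{prop:KL_change} to $\overline{\rho}_{t}^{h}$ with velocity $w_{t}^{h}$. Assumption \ref{item:L_1_condition} is tailored precisely to the decay hypothesis of that proposition: since $\overline{\rho}_{0}^{h} w_{0}^{h} = (\rho\, \overline{v}_{0}^{h}) \ast \kappa^{h}$, the stated condition $\norm{((\rho\, \overline{v}_{0}^{h}) \ast \kappa)(x)} \,\absval{\log(\overline{\rho}_{0}^{h}/\trd)(x)} = o(\norm{x}^{1-d})$ is exactly what is needed. This yields
\begin{equation*}
\tfrac{\mathrm{d}}{\mathrm{d}t}\big|_{t=0} \dkl(\overline{\den}_{t}^{h} \| \trd)
=
-\innerprod{w_{0}^{h}}{\overline{v}_{0}^{h}}_{L^{2}_{\overline{\den}_{0}^{h}}}
=
-\int_{\bR^{d}} \bigl((\rho\, \overline{v}_{0}^{h}) \ast \kappa^{h}\bigr)(y) \cdot \overline{v}_{0}^{h}(y)\, \mathrm{d}y,
\end{equation*}
and a Fubini swap rewrites this last integral as $\innerprod{\overline{v}_{0}^{h}}{\overline{v}_{0}^{h} \star \kappa^{h}}_{L^{2}_{\rho}}$ (using the paper's convention $(f\star\kappa)(x) = \int f(z)\,\kappa(z-x)\,\mathrm dz$). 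The problem thus reduces to showing that $\innerprod{\overline{v}_{0}^{h}}{\overline{v}_{0}^{h} \star \kappa^{h}}_{L^{2}_{\rho}} \geq 0$ for all sufficiently small $h$.

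The hard part is this last inequality. Morally, as $h \to 0$ we have $\overline{\den}_{0}^{h} = \rho \ast \kappa^{h} \to \rho$, so $\overline{v}_{0}^{h} \to v^{\ast} \defeq \nabla \log(\trd/\rho)$ pointwise, and assumption \ref{item:compact_set} guarantees that the mass of $\kappa^{h}$ outside an $\varepsilon$-ball around the origin vanishes, giving pointwise convergence $\overline{v}_{0}^{h} \star \kappa^{h} \to v^{\ast}$. My strategy is to decompose
\begin{equation*}
\innerprod{\overline{v}_{0}^{h}}{\overline{v}_{0}^{h} \star \kappa^{h}}_{L^{2}_{\rho}}
=
\norm{\overline{v}_{0}^{h}}^{2}_{L^{2}_{\rho}}
+
\innerprod{\overline{v}_{0}^{h}}{\overline{v}_{0}^{h} \star \kappa^{h} - \overline{v}_{0}^{h}}_{L^{2}_{\rho}}
\end{equation*}
and estimate the second term by Cauchy--Schwarz. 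Assumption \ref{item:Cross_Correlation_in_L2} delivers the uniform $L^{2}_{\rho}$-integrability of $\overline{v}_{0}^{h}\star\kappa^{h}$ outside a compact set $\cK$, which, combined with the pointwise convergence on $\cK$ and dominated convergence, yields $\norm{\overline{v}_{0}^{h} \star \kappa^{h} - \overline{v}_{0}^{h}}_{L^{2}_{\rho}} \to 0$. The whole expression therefore converges to $\norm{v^{\ast}}^{2}_{L^{2}_{\rho}} \geq 0$, which gives the inequality for sufficiently small $h$ whenever $v^{\ast} \neq 0$. The degenerate case $\rho = \trd$ (where $v^{\ast} = 0$ and both sides vanish to the same leading order in $h$) requires a finer direct expansion showing that $\overline{v}_{0}^{h}$ and $\overline{v}_{0}^{h}\star\kappa^{h}$ agree to leading order, so that the inner product stays non-negative.
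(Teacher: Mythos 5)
Your first half coincides with the paper's proof: the continuity equation for $\overline{\den}_{t}^{h}$ with effective velocity $w_{t}^{h}=\frac{(\rho_{t}\overline{\vel}_{t}^{h})\ast\kappa^{h}}{\overline{\den}_{t}^{h}}$, the use of assumption \ref{item:L_1_condition} to justify \Cref{prop:KL_change}, and the reduction to showing $\innerprod{\overline{\vel}_{0}^{h}}{\overline{\vel}_{0}^{h}\star\kappa^{h}}_{L^{2}_{\rho}}\geq 0$ for small $h$ are exactly the paper's steps. Where you diverge is the positivity argument, and that is where there are genuine gaps. Your soft limiting argument compares the inner product with the $h$-independent quantity $\norm{v^{\ast}}_{L^{2}_{\rho}}^{2}$, $v^{\ast}=\nabla\log\frac{\trd}{\rho}$, and therefore collapses precisely when $\rho=\trd$: then the limit is $0$, convergence gives no sign information, and the statement is not vacuous for $h>0$ (since $\trd\ast\kappa^{h}\neq\trd$ one has $\overline{\vel}_{0}^{h}\neq 0$, so the sign of the derivative still has to be established). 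You flag this case but only gesture at ``a finer direct expansion''; that expansion is the actual content. The paper avoids the issue entirely by making the bound self-scaling: all error terms are controlled by fractions of $A=\norm{\overline{\vel}_{0}^{h}}_{L^{2}_{\rho}}^{2}$ and $B=\norm{\overline{\vel}_{0}^{h}}_{L^{1}_{\rho}}$ themselves (the region outside $\cK$ contributes at most $\tfrac{A}{8}$ via \ref{item:Cross_Correlation_in_L2}, the kernel tail at most $\tfrac{A}{8B}$ via \ref{item:compact_set}), so the lower bound $\tfrac{A}{4}-\tfrac{A}{8}-\tfrac{A}{8}=0$ holds whether or not $\overline{\vel}_{0}^{h}$ vanishes in the limit $h\to 0$.

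In addition, the convergence claims you rely on do not follow from the hypotheses as stated. Assumption \ref{item:compact_set} is pointwise in $y$ (for each $y$ there is an $h_{1}$, with no uniformity over $y$), and \ref{item:Cross_Correlation_in_L2} only controls tails outside $\cK$; neither supplies the $h$-uniform dominating function on $\cK$ that your dominated-convergence step needs, nor the convergence $\overline{\vel}_{0}^{h}\to v^{\ast}$ in $L^{2}_{\rho}$ that your leading term requires. Upgrading the pointwise hypotheses to uniform control on a compact set is exactly the technical core of the paper's proof (the continuity-and-compactness construction of $E(y)$ and $H(y)$ and the minima $\varepsilon$ and $h_{3}$ over $\cK$); your sketch replaces it with a one-line appeal that does not go through as written. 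A repair closest in spirit to your strategy would be to prove $\norm{\overline{\vel}_{0}^{h}\star\kappa^{h}-\overline{\vel}_{0}^{h}}_{L^{2}_{\rho}}\leq\tfrac{1}{2}\norm{\overline{\vel}_{0}^{h}}_{L^{2}_{\rho}}$ for all sufficiently small $h$ --- a bound relative to $\overline{\vel}_{0}^{h}$ rather than to $v^{\ast}$, which would also cover the case $\rho=\trd$ --- but establishing it requires precisely the uniformization argument you skipped, and is essentially what the paper does.
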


\begin{proof}
	The proof is provided in \Cref{appendix:Proofs}.
\end{proof}

\begin{remark}
Again, the technical conditions \ref{item:compact_set}---\ref{item:L_1_condition} can be ensured by imposing sufficient decay conditions on $\rho, \trd, \kappa$, and their gradients.
For instance, if the tails of $\rho, \trd, \kappa$ decay exponentially and the tails of $\log \trd$ have at most polynomial decay, the conditions are satisfied by commonly used kernels such as the Gaussian kernel.
\end{remark}

\subsection{Discussion of the interacting particle dynamics}
\label{section:discussion_interacting_particle_dynamics}

The velocity field
$\hat{\vel}^{h}
=
\nabla \log \frac{\trd}{\KDE_{\kappa}^{h}[\bsX]}$ in \eqref{equ:ODE_with_KDE} and \eqref{equ:FP_KDE_dynamics}, which arises from substituting the kernel density estimate for $\rho_{t}$ in $\vel_t^{\FP} = \nabla \log \frac{\trd}{\den_t}$, can be interpreted as an approximation to gradient descent on the objective function
\[
F(\bsX)
\coloneqq
\dkl ( \KDE_{\kappa}^{h}[\bsX] \, \|\,  \trd )
=
\int \KDE_{\kappa}^{h}(\bsX) \log \frac{\KDE_{\kappa}^{h}(\bsX)}{\trd},
\]
with respect to the particle positions. Specifically,
\begin{align*}
	-J\, \frac{\partial F}{\partial X_m}(\bsX)
	=
	\int \kappa^{h}(x - X_{m}) \nabla \log \frac{\trd}{\KDE_{\kappa}^{h}[\bsX]}(x) \, \mathrm dx
	\approx
	\nabla \log \frac{\trd}{\KDE_{\kappa}^{h}[\bsX]}(X_{m})
	=
	\hat{\vel}^{h}(X_{m}),
\end{align*}
where the approximation is accurate for sharply concentrated kernels (\( h \ll 1 \)) such as a centered Gaussian with small variance.
This dynamics bears a resemblance to Stein Variational Gradient Descent (SVGD) when the SVGD kernel is chosen as \( k(x,x') \coloneqq \kappa^{h}(x' - x) \):
\begin{align*}
\hat{\vel}^{h}(x)
&=
\nabla \log \trd (x) - \frac{J^{-1}\sum_{j=1}^{J} \nabla \kappa^{h}(x-x_{j})}{J^{-1}\sum_{j=1}^{J} \kappa^{h}(x-x_{j})},
\\
\hat{\vel}^{\SVGD}(x)
&=
J^{-1} \sum_{j=1}^{J} \kappa_{h}(x-x_{j}) \bigg( \nabla\log \trd (x_{j})
-
\frac{\nabla \kappa^{h}(x-x_{j})}{\kappa^{h}(x-x_{j})} \bigg).
\end{align*}
The corresponding terms operate in a similar manner to SVGD:
$\nabla \log \trd$ steers the particles toward the high-probability regions of the target distribution, while the term $J^{-1}\sum_{j=1}^{J} \nabla \kappa^{h}(x-x_{j})$ introduces a repulsive force between particles, preventing them from collapsing into the modes of $\trd$ \citep{liu2016stein}.

In light of the above interpretation as an approximate gradient descent, further modifications could be explored beyond optimizing the particle positions $\bsX$ alone. These include:
\begin{itemize}
	\item optimizing the bandwidth $h$;
	\item introducing individual bandwidths $h_{j} > 0$ or covariance matrices $h_{j} \in \bR^{d \times d}$ for each mixture component $X_{j}$, as in variable kernel density estimation \citep{scott2015multivariate,breiman1977variable,silverman1986density}, and optimizing over those;
	\item introducing weights $w_{j} \geq 0$ for each mixture component, $\KDE_{\kappa}^{h,\bsw}[\bsX] \defeq \sum_{j=1}^{J} w_{j} \, \kappa^{h}(x - X_j)$, and optimizing over those.
\end{itemize}
These potential enhancements represent directions for future work.

\subsection{Technical details: ODE solver, stopping criteria, bandwidth selection, and computational complexity}
\label{section:technical}

Through empirical testing, we found that implicit ODE solvers perform better than explicit ones (we used the \textsc{Matlab} solver \texttt{ode15s}), suggesting that the ODE \eqref{equ:ODE_with_KDE} is stiff. 
We define the termination condition based on the maximal velocity norm of the particles, $\nu_{t} \defeq \max_{j=1,\dots,J} \norm{\hat{\vel}_{t}^{h}(X_{j}(t))}$. 
When $\nu_{t}$ falls below a specified threshold $\varepsilon > 0$, we consider the particles to have become stationary and stop solving the ODE.

In all our examples, we use Gaussian kernels, and the bandwidth is chosen ad hoc, independent of the number of particles $J$. 
One could also consider decreasing the bandwidth as $J$ increases, following ideas similar to those in \citet{wang2019accelerated} and \citet{liu2019understanding} for selecting an optimal bandwidth.

Assuming that the number of right-hand side evaluations required to solve the ODE \eqref{equ:ODE_with_KDE} does not scale with the number of particles $J$, the computational complexity of constructing $J$ KDE points is $\cO(J^{2})$. 
However, verifying this assumption in practice is challenging.


\section{Applications}
\label{section:Applications}

In the previous section we changed our course from approximately solving the Fokker--Planck equation to the estimation of the target density $\trd$ by a mixture distribution, in contrast to \citet{wang2019accelerated} and \citet{liu2019understanding}, who employ the dynamics \eqref{equ:ODE_with_KDE} as a particle flow method.
Note that this dynamics does not require the normalizing constant of $\trd$, since $\hat v_t^{h}$ relies on $\trd$ only through the gradient of its logarithm.

In this section, we discuss several applications based on this methodology. These applications can be grouped into two distinct areas, reflecting the observations in \ref{item:KDE_points_KDE_property} and \ref{item:KDE_points_deconvolution_property}, introduced earlier and summarized in \Cref{table:application_areas}. The first area leverages the kernel density estimate $\KDE_{\kappa}^{h}[\bsX (\infty)]$ as an approximation of $\trd$, while the second area utilizes the KDE points $\bsX (\infty)$ directly.

\begin{table}[t!]
	\centering
	\caption{%
		The KDE points are applied in two main areas: one uses the kernel density estimate $\KDE_{\kappa}^{h}[\bsX (\infty)]$ as an approximation of $\trd$, and the other involves employing the KDE points $\bsX (\infty)$ as samples from $\check{\den}^{h}$.
	}
	\label{table:application_areas}
	\bgroup
	\begin{tabular}{m{9em}<{\centering} | m{9em}<{\centering} || m{9em}<{\centering} | m{9em}<{\centering} } 
		\toprule
		\multicolumn{2}{c||}{\cellcolor{ourblue!15}\textbf{Application Area A}, \Cref{section:VI}} &
		\multicolumn{2}{c}{\cellcolor{ouryellow!30}\textbf{Application Area B}, \Cref{section:Convolution_Sampling}} \\
		\midrule
		\midrule
		\multicolumn{2}{m{19.12em}<{\centering}||}{\cellcolor{ourblue!8} Using the estimate $\KDE_{\kappa}^{h}[\bsX (\infty)] \approx \trd$:
			\newline
			Variational Bayesian methods} &
		\multicolumn{2}{m{19.12em}<{\centering}}{\cellcolor{ouryellow!16} Using the KDE points $\bsX(\infty)$ as (super-root-$n$) samples from $\check{\den}^{h}$} \\
		\midrule
		\cellcolor{ourblue!5} independent or stratified samples from
		$\KDE_{\kappa}^{h}[\bsX (\infty)]$,
		with importance reweighting,
		\Cref{section:Direct_Sampling}
		& 
		\cellcolor{ourblue!5} Transporting QMC points (and others) to $\KDE_{\kappa}^{h}[\bsX (\infty)]$, with importance reweighting,
		\Cref{section:Transport_Sampling}
		& 
		\cellcolor{ouryellow!10}
		Outbedding:
		Inversion of kernel mean embeddings,
		\Cref{section:Outbedding}
		& 
		\cellcolor{ouryellow!10} Resampling within		
		sequential Monte
		\hspace{2em}
		Carlo,
		\hspace{2em}		
		\Cref{section:SMC_resampling}
		\\
		\bottomrule
	\end{tabular}
	\egroup
\end{table}

\subsection{\hlc[ourblue!8]{Leveraging the KDE: Variational Bayesian methods}}
\label{section:VI}

Variational inference \citep{jordan1999introduction,wainwright2008graphical,attias1999variational,winn2005variational} aims to approximate a target (posterior) density $\trd$ by a parametrized density $q_{\phi}$, where $\phi \in \Phi$ represents the (family of) parameters. 
Typically, the distance between $\trd$ and $q_{\phi}$ is measured using the Kullback--Leibler divergence. 
The objective is to solve the following minimization problem:
\begin{equation}
	\label{equ:VI_formulation}
	\phi^{\ast}
	=
	\argmin_{\phi \in \Phi} \dkl(q_{\phi} \| \trd),
\end{equation}
although, as \citet{blei2017variational} note, ``any procedure which uses optimization to approximate a density can be termed variational inference.''

In our case, the target density is approximated by a kernel density estimate $\KDE_{\kappa}^{h}[\bsX]$, i.e., by a mixture distribution where the parameters $\phi = \bsX \in (\bR^{d})^{J}$ correspond to the particle positions. 
When the dimension $d$ or the number of particles $J$ is large, the resulting variational inference problem becomes particularly challenging due to the high number of parameters ($d \times J$) and the non-convexity of the objective function.

\Cref{theorem:Reduction_KL_KDE} demonstrates that, for sufficiently small bandwidth $0 < h \leq h_{\ast}$ of the kernel $\kappa^{h}$, the ODE \eqref{equ:ODE_with_KDE} can indeed be viewed as a minimization procedure of the form \eqref{equ:VI_formulation}. 
However, it is important to emphasize that a bound $h_{\ast}$ that works uniformly for all times may not exist (see \Cref{remark:Remark_on_reduction_KL_KDE}). 
Moreover, in the long term and for large ensemble sizes $J$, the particles tend to arrange themselves according to \eqref{equ:score_interpolation}, as discussed in \Cref{section:Approach}.


\subsubsection{\hlc[ourblue!8]{Direct sampling}}
\label{section:Direct_Sampling}

A key task in Bayesian inference is the computation of expected values $\bE_{\trd}[f] = \int_{\bR^{d}} f \, \trd$ for certain quantities of interest $f \in L_{\trd}^{1}$. 
Once we have established an approximation $\hat{\den}_{t}^{h} = \KDE_{\kappa}^{h}[\bsX (t)] \approx \trd$ for sufficiently large $t > 0$ using the dynamics in \eqref{equ:ODE_with_KDE}, and assuming that independent samples from the kernel $\kappa$ are easy to generate (e.g., if $\kappa$ is Gaussian), we can efficiently sample from $\hat{\den}_{t}^{h}$ using the \emph{composition method} \citep[Section 2.3.3]{rubinstein2016simulation}:
\begin{algorithm}[Direct sampling from $\hat{\den}_{t}^{h}$ via the composition method]
	\label{alg:direct_sampling_of_mixtures}
	\
	\begin{enumerate}
		\item
		Draw $Z \sim \kappa$;
		\item
		Draw $\nu$ from the uniform distribution $\bP_{\Unif}$ on $\Omega = \{1, \dots, J\}$, i.e., $\bP_{\Unif}(\{j\}) = J^{-1}$;
		\item
		Set $Y = hZ + X_{\nu}(t)$.
	\end{enumerate}
\end{algorithm}

Using this method, a large number $K \in \bN$ of independent samples $Y_k \stackrel{\iid}{\sim} \hat{\den}_{t}^{h}$, $k = 1, \dots, K$, can be generated with very low computational cost. 
If there are concerns about the approximation quality of $\hat{\den}_{t}^{h}$, the importance sampling trick (\citealt[Section 5.7]{rubinstein2016simulation}; \citealt[Section 3.3]{robert2004monte}) can be employed to account for the approximation error:
\begin{equation}
	\label{equ:importance_sampling}
	\bE_{\trd}[f]
	=
	\int_{\bR^{d}} f \, \frac{\trd}{\hat{\den}_{t}^{h}} \, \hat{\den}_{t}^{h}
	\approx
	\frac{1}{K} \sum_{k=1}^{K} f(Y_{k}) \, \frac{\trd(Y_{k})}{\hat{\den}_{t}^{h}(Y_{k})}.
\end{equation}

If the target density $\trd$ is known only up to a normalizing constant, i.e., we can evaluate $\tilde{\rho}_{\fin} = Z \trd$ with an unknown constant $Z = \int \tilde{\rho}_{\fin}$, then self-normalized importance sampling\footnote{Note that, unlike the importance sampling estimator \eqref{equ:importance_sampling}, self-normalized importance sampling typically yields a \emph{biased} estimator.} \citep[Section 5.7.1]{rubinstein2016simulation} may be used:
\begin{equation}
	\label{equ:self_normalized_importance_sampling}
	\bE_{\trd}[f]
	=
	\frac{\int_{\bR^{d}} f \, \tilde{\rho}_{\fin}}{\int_{\bR^{d}} \tilde{\rho}_{\fin}}
	=
	\frac{\int_{\bR^{d}} f \, \frac{\tilde{\rho}_{\fin}}{\hat{\den}_{t}^{h}} \, \hat{\den}_{t}^{h}}{\int_{\bR^{d}} \frac{\tilde{\rho}_{\fin}}{\hat{\den}_{t}^{h}} \, \hat{\den}_{t}^{h}}
	\approx
	\frac{\sum_{k=1}^{K} f(Y_{k}) \, \frac{\tilde{\rho}_{\fin}(Y_{k})}{\hat{\den}_{t}^{h}(Y_{k})} }{\sum_{k=1}^{K} \frac{\tilde{\rho}_{\fin}(Y_{k})}{\hat{\den}_{t}^{h}(Y_{k})}}.
\end{equation}

As a side note, one might aim for an importance sampling estimator with minimal variance \citep[Theorem 3.12]{robert2004monte}, applying the dynamics \eqref{equ:ODE_with_KDE} to the modified target density $\tilde{\rho}_{\fin} \propto \absval{f} \trd$. 
However, this approach will not be pursued here.

Since $\hat{\den}_{t}^{h}$ is a mixture distribution, \emph{stratified} sampling \citep[Chapter 5.5]{rubinstein2016simulation} can be employed as an alternative to the independent sampling in \Cref{alg:direct_sampling_of_mixtures}. 
In this approach, exactly $L \in \bN$ independent samples are drawn from each of the $J$ mixture components, resulting in a total of $K = JL$ stratified samples.
The corresponding Monte Carlo estimator can be shown to have a lower variance \citep[Proposition 5.5.1]{rubinstein2016simulation}.

In summary, the dynamics \eqref{equ:ODE_with_KDE} can be leveraged to obtain a good approximation $\hat{\rho}_{t}^{h}$ to the target density $\trd$. 
Since it is straightforward and computationally inexpensive to generate either independent or stratified samples from $\hat{\rho}_{t}^{h}$, Monte Carlo methods are a natural choice, with (self-normalized) importance sampling providing a way to correct for discrepancies in the approximation.
This approach offers a promising alternative to MCMC methods \citep{hastings1970monte, robert2004monte}, which are often slow due to correlations between successive samples.

\subsubsection{\hlc[ourblue!8]{Sampling via transport maps: super-root-$n$ convergence}}
\label{section:Transport_Sampling}

In the previous subsection, we discussed how to obtain \emph{independent} (or stratified) samples from $\hat{\den}_{t}^{h}$, which represents a significant improvement over MCMC methods, where convergence is often slow due to dependence between the samples.
However, since this approach remains a Monte Carlo approximation, a convergence rate better than $K^{-1/2}$ cannot be expected.

However, because $\hat{\den}_{t}^{h}$ is a mixture distribution, it is possible to generate higher-order point sequences from $\hat{\den}_{t}^{h}$ that achieve a super-root-$n$ (or, more accurately, super-root-$K$) convergence rate \citep{Cui2023quasimonte,klebanov2023transporting}.
The key idea is to construct an \emph{exact} transport map $T$ (up to the solution of ODEs with explicit right-hand side) from $\kappa^{h}$ to $\hat{\den}_{t}^{h}$, i.e., $T_{\#}\kappa^{h} = \hat{\den}_{t}^{h}$, and apply this map to $K$ quasi-Monte Carlo (QMC) points \citep{niederreiter1992random,fang1993number,caflisch1998monte,dick2013high} corresponding to the density $\kappa^{h}$.
For a Gaussian kernel $\kappa^{h}$, such QMC points can be obtained through an analytical transformation \citep{kuo2010lattice,klebanov2023transporting}.\footnote{Alternatively, sparse grids \citep{smolyak1963quadrature,zenger1991sparse,gerstner1998numerical} or higher-order nets \citep{dick2010digital} can be used.
For simplicity and proof of concept, we focus on QMC (specifically, the $d$-dimensional Halton sequence is employed in all our computations).}
As before, (self-normalized) importance sampling can be employed to correct for any approximation error between $\hat{\den}_{t}^{h}$ and $\trd$.

Hence, the overall procedure consists of solving two systems of ODEs: one for constructing the $J$ KDE points and the corresponding density $\hat{\den}_{t}^{h}$, and the other for transporting $K$ QMC points to $\hat{\den}_{t}^{h}$, which we refer to as \emph{KDE-QMC points}. 
\Cref{fig:Rezende_KDE_QMC_vs_MCMC} illustrates this methodology using a bimodal target density in $d=2$ dimensions from \citep{rezende2015variational}, defined as
\[
\trd
\, \propto\, 
\exp(-U),
\qquad
U(x)
=
\tfrac{1}{2}\Absval{\frac{\norm{x}-2}{0.4}}^{2} - 
\log\left( 
e^{-\tfrac{1}{2}\Absval{\frac{x_{1}-2}{0.6}}^{2}} + 
e^{-\tfrac{1}{2}\Absval{\frac{x_{2}+2}{0.6}}^{2}}
\right).
\]
To ensure a fair comparison, we break the symmetry by initializing the density as $\simpd = \cN\big( (\tfrac{1}{2},\tfrac{1}{2}), \Id_{2} \big)$.
For small to moderately large numbers $J$ of KDE points, this results in more points and higher density values of $\hat{\den}_{t}^{h}$ near the right mode compared to the left, as seen in \Cref{fig:Rezende_KDE_QMC_vs_MCMC} (left and middle).
This imbalance is addressed by importance reweighting, as shown in \Cref{fig:Rezende_KDE_QMC_vs_MCMC} (middle), where the importance weights on the left are larger than those on the right.
If importance sampling alone cannot fully correct these imbalances (e.g., when regions of high density lack KDE points even for large $J$), a strategy similar to simulated annealing can be applied, as discussed in \Cref{section:Annealing}.

\Cref{fig:Rezende_KDE_QMC_vs_MCMC} (right) compares the performance of KDE-QMC points against MCMC (random walk Metropolis--Hastings algorithm with an optimally tuned acceptance rate), as well as independent and stratified samples from $\hat{\den}_{t}^{h}$, for estimating the expected value $\bE_{\trd}[f]$ with $f(x) = x$. 
The error is measured using the Euclidean norm and plotted as a function of the total number $K$ of samples.
The number of KDE points used in $\hat{\den}_{t}^{h}$ (and consequently for KDE-QMC points, independent, and stratified samples) is set to $J = \big\lceil K^{1/2} \big\rceil$, ensuring that the computational complexity for obtaining the samples is $\cO(K)$ across all methods (cf.\ \Cref{section:technical}).

\begin{figure}[t]
	\centering
	\begin{subfigure}[b]{0.32\textwidth}
		\centering
		\includegraphics[width=\textwidth]{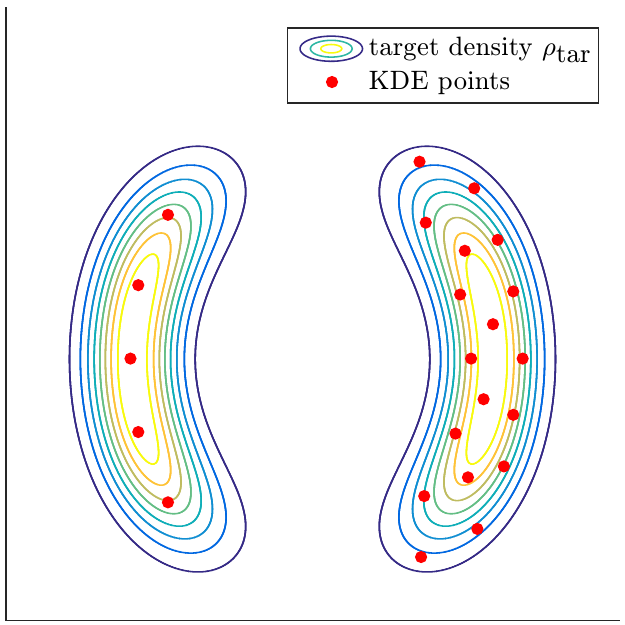}	
	\end{subfigure}
	\hfill
	\begin{subfigure}[b]{0.32\textwidth}
		\centering
		\includegraphics[width=\textwidth]{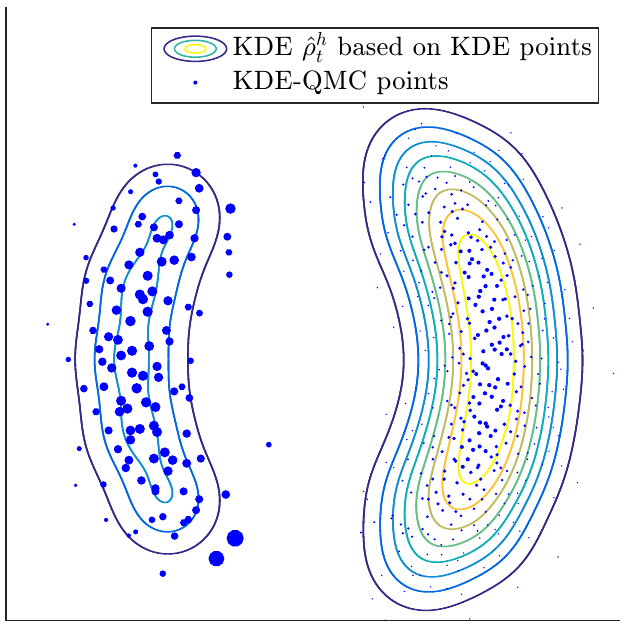}	
	\end{subfigure}
	\hfill
	\begin{subfigure}[b]{0.32\textwidth}
		\centering
		\includegraphics[width=\textwidth]{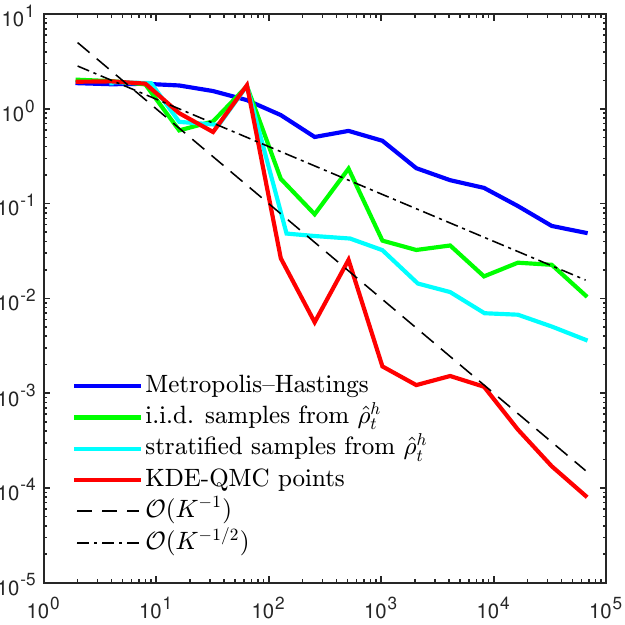}	
	\end{subfigure}
	\caption{
		\emph{Left:} The target density $\trd$ with $J = 23$ KDE points. 
		Due to the asymmetric initial density $\simpd$, significantly more KDE points are concentrated near the right mode than the left (see \Cref{section:Annealing} for potential corrections). 
		\emph{Middle:} The estimated density $\hat{\den}_{t}^{h}$ with $K = 526$ KDE-QMC points. 
		The asymmetry is corrected through larger importance weights (indicated by marker sizes) near the left mode. 
		\emph{Right:} Error estimates for $\bE_{\trd}[f]$ with $f(x) = x$ using MCMC, independent and stratified samples from $\hat{\den}_{t}^{h}$, and KDE-QMC points. 
		As expected, the first three methods exhibit a convergence rate of $K^{-1/2}$ (the error plots display the mean over ten independent runs), whereas the KDE-QMC points achieve a convergence rate of approximately $K^{-1}$.
	}
	\label{fig:Rezende_KDE_QMC_vs_MCMC}
\end{figure}

\subsection{\hlc[ouryellow!16]{Employing the KDE points: Sampling from $\check{\den}^{h}$ given only $\trd = \check{\den}^{h} \ast \kappa^{h}$}}
\label{section:Convolution_Sampling}

As outlined in \Cref{section:Approach}, the dynamics defined by \eqref{equ:ODE_with_KDE} generates points $z_{j} \defeq X_{j}(\infty)$ that are not distributed according to $\trd$, but rather according to $\check{\den}^{h}$, where $\check{\den}^{h}$ is a probability density satisfying $\check{\den}^{h} \ast \kappa^{h} \approx \trd$ in the sense of \eqref{equ:score_interpolation}.
While this property of KDE points may be undesirable for most classical applications—where samples from $\trd$ are typically needed—there exist specific applications that benefit from samples drawn from $\check{\den}^{h}$.
One such application is discussed in \Cref{section:Outbedding}.

Therefore, we now turn to the scenario where we seek to sample from $\check{\den}^{h}$, but can only evaluate $\trd = \check{\den}^{h} \ast \kappa^{h}$ and its gradient.
This setup parallels the frameworks of kernel herding \citep{chen2010super,Bach2012HerdingConditionalGradient,lacoste2015sequential}, Bayesian Monte Carlo \citep{ghahramani2003bayesian}, and Sequential Bayesian Quadrature \citep{huszar2012herding}.

To numerically verify the claims made in \Cref{section:Approach}—namely, that the dynamics \eqref{equ:ODE_with_KDE} generates points that are $\check{\den}^{h}$-distributed and exhibit super-root-$n$ convergence—we generate KDE points from the Gaussian mixture density $\check{\den}^{h}$ analyzed by \citet{chen2010super} and \citet{huszar2012herding}. 
We then compare these KDE points to independent samples from $\check{\den}^{h}$, as well as to those produced by kernel herding and sequential Bayesian quadrature (SBQ).
The density, along with 78 KDE points, and the first 78 herding and SBQ samples, is visualized in \Cref{fig:Chen_Density_Herding_KDE}.
For a fair comparison, both the target density and the kernel $\kappa^{h}$ are exactly as specified in the works of \citet{chen2010super} and \citet{huszar2012herding}.\footnote{We make use of the code provided by \citet{huszar2012herding} (available at \url{github.com/duvenaud/herding-paper}), and we are grateful to the authors for sharing it.}

\begin{figure}[t]
	\centering
	\begin{subfigure}[b]{0.49\textwidth}
		\centering
		\includegraphics[width=\textwidth]{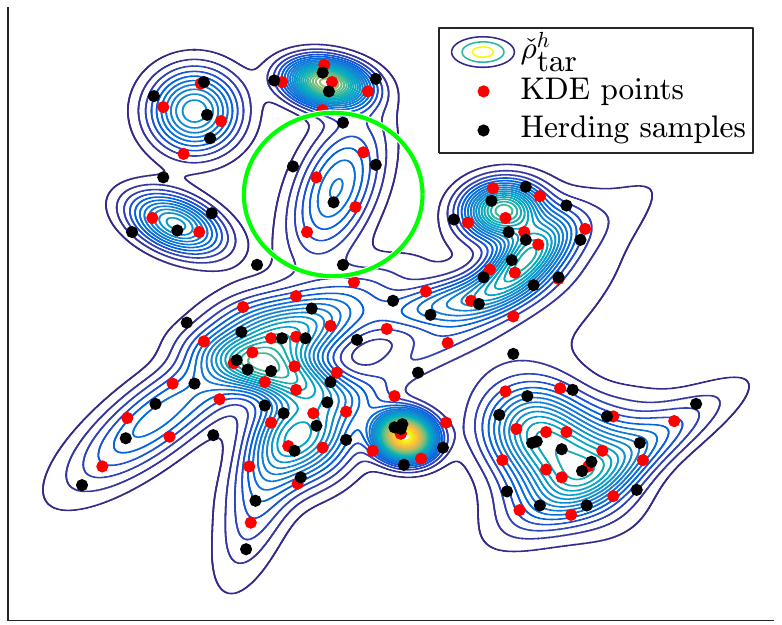}		
	\end{subfigure}
	\hfill
	\begin{subfigure}[b]{0.49\textwidth}
		\centering
		\includegraphics[width=\textwidth]{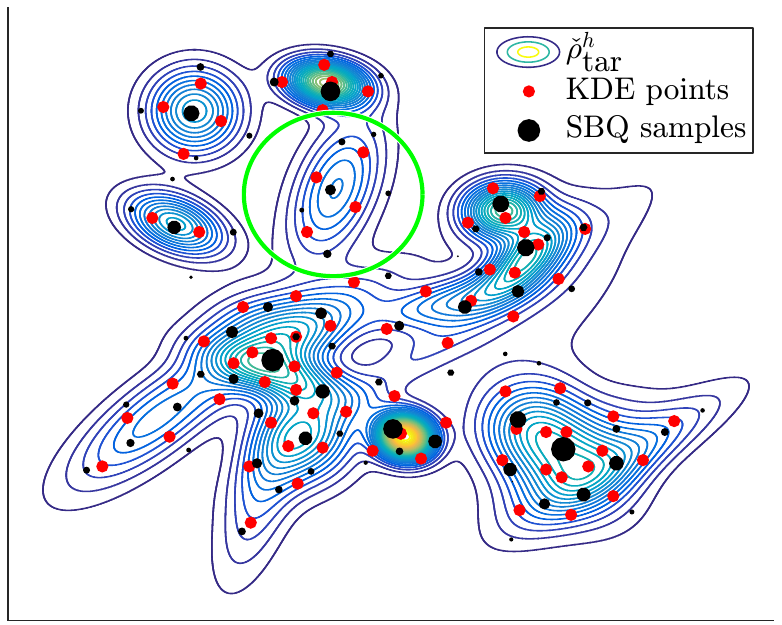}		
	\end{subfigure}
	\caption{
		Comparison of 78 KDE points with the first 78 samples from kernel herding (left) and sequential Bayesian quadrature (right; marker sizes correspond to sample weights).
		While all three point sets are fairly evenly distributed, herding and SBQ samples are more frequently placed in regions of lower density.
		This behavior is due to the sequential nature of their generation, as illustrated by the green circles: 
		Once a ``central'' point is fixed at the optimal position for a given time step, it cannot be adjusted later, forcing subsequent samples into positions further away.
		In contrast, KDE points benefit from greater flexibility since they are generated simultaneously.
		In addition, KDE points do not require solving non-convex optimization problems, as discussed in \Cref{remark:Oprimization_issues_SBQ}.
	}
	\label{fig:Chen_Density_Herding_KDE}
\end{figure}

The sequential nature of kernel herding and SBQ does not permit an existing point to be adjusted once further samples are placed nearby.
As a result, later samples often end up in regions of relatively low density.
This effect is illustrated by the green circles in \Cref{fig:Chen_Density_Herding_KDE}, where KDE points do not exhibit this behavior, since they are generated collectively through the dynamics \eqref{equ:ODE_with_KDE}.
However, sequential methods have the advantage of allowing samples to be added ``ad hoc'' as needed, whereas increasing the number of KDE points requires rerunning the entire dynamics \eqref{equ:ODE_with_KDE}.

\Cref{fig:ConvergenceQuadratureError} shows the convergence rate of quadrature rules for several point sets $\bz^{(J)} = (z_{1}^{(J)},\dots,z_{J}^{(J)})$ with corresponding weights $\bw^{(J)} = (w_{1}^{(J)},\dots,w_{J}^{(J)})$ as $J\to\infty$.
The left plot presents the average quadrature error,
\[
\cE_{\bz^{(J)},\bw^{(J)} }[f]
\defeq
\Absval{\bE_{\check{\den}^{h}}[f] - \sum_{j=1}^{J} w_{j}^{(J)} f(z_{j}^{(J)})},
\]
calculated over 50 randomly selected test functions $f$ from the unit ball in the reproducing kernel Hilbert space $\cH_{k}$ corresponding to the kernel $k(x,x') \coloneqq \kappa^{h}(x - x')$. 
The kernel used is Gaussian, ensuring it is symmetric and positive definite.
For details on the choice of test functions, see \cite{huszar2012herding}.
The right plot shows the maximum mean discrepancy (MMD) between $\check{\den}^{h}$ and the discrete measure $\bP_{\bz^{(J)},\bw^{(J)}} \coloneqq \sum_{j=1}^{J} w_{j}^{(J)} \delta_{z_{j}^{(J)}}$,
\[
\MMD(\check{\den}^{h} , \bP_{\bz^{(J)},\bw^{(J)}})
=
\sup_{\norm{f}_{\cH_{k}} \leq 1}
\cE_{\bz^{(J)},\bw^{(J)}}[f]
=
\norm{\mu_{\check{\den}^{h}} - \mu_{\bP_{\bz^{(J)},\bw^{(J)}}}}_{\cH_{k}},
\]
where $\mu_{\bP} \coloneqq \int k(x,\quark)\, \mathrm{d}\bP(x) \in \cH_{k}$ denotes the kernel mean embedding of the distribution $\bP$ \citep{berlinet2004rkhs,muandet2017kernel}.

\begin{figure}[t]
	\centering
	\begin{subfigure}[b]{0.49\textwidth}
		\centering
		\includegraphics[width=\textwidth]{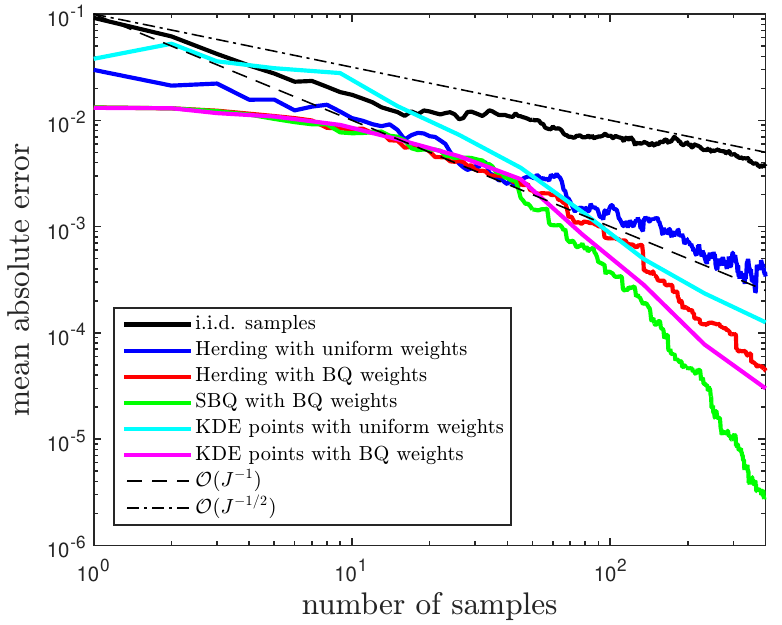}
		\caption{dynamics via the SDE \eqref{equ:SDE}}			
	\end{subfigure}
	\hfill
	\begin{subfigure}[b]{0.49\textwidth}
		\centering
		\includegraphics[width=\textwidth]{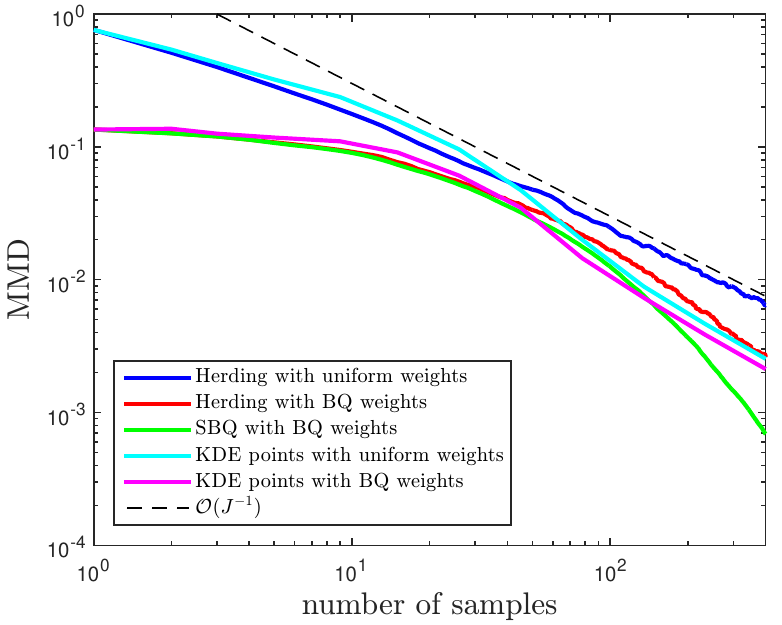}
		\caption{dynamics via ODE \eqref{equ:FokkerAsContinuity}}			
	\end{subfigure}
	\caption{Performance comparison of KDE points, kernel herding, and SBQ using both uniform and SBQ weights, measured by the average quadrature error over 50 randomly selected functions (left) and by maximum mean discrepancy (MMD) (right).}
	\label{fig:ConvergenceQuadratureError}
\end{figure}

The point sets compared in the plots include KDE points with uniform weights, kernel herding with uniform weights, and SBQ with SBQ weights.
Additionally, following the suggestion by \cite{huszar2012herding}, we apply SBQ weights to both KDE points and herding samples.

As shown in \Cref{fig:ConvergenceQuadratureError}, KDE points slightly outperform kernel herding, though the convergence rates appear to be similar.
Both methods benefit slightly from using SBQ weights instead of uniform weights.
SBQ, however, appears to outperform the other methods, though its convergence rate remains unknown \citep{huszar2012herding}, and it comes with a computational complexity of $\cO(J^{3})$ for $J$ points.

\begin{remark}
	\label{remark:Oprimization_issues_SBQ}
	Notably, the sequential procedures of kernel herding and SBQ require solving a non-convex optimization problem at each step to determine the next point.
	In practice, this is often approached by optimizing over either
	\begin{itemize}
		\item
		a large number of randomly chosen samples within a predefined cube,
		\vspace{-3ex}
		\begin{quotation}
			\noindent
			{\small 
				\begin{verbatim}	
					% This demo searches for possible next locations by drawing from a uniform prior
					% with the following range.  This is a bad idea in high dimensions,
					% and is only done here so that all the code will be really simple.
					range = [ -6, 6; -5 3];
				\end{verbatim}
			}
		\end{quotation}
		(code by \citet{huszar2012herding} on \url{github.com/duvenaud/herding-paper});
		\item
		or a large number of initially given random samples from $\check{\den}^{h}$, performing a procedure known as ``thinning'' or ``quantization'',
		\begin{quotation}
			\noindent
			{\small 
				In general, [finding the next quadrature point] will yield a non-convex optimization problem, and thus cannot be solved with guarantees, even with gradient descent.
				[...] we approach [this step] by performing an exhaustive search over $M$ random samples from $p$ [...]. 
				We follow the idea from the kernel herding paper \citep{chen2010super} to choose the best $N$ ``super-samples'' out of a large set of $M$ samples.
			}
		\end{quotation}
	\citep{lacoste2015sequential}.
	\end{itemize}
	Both approaches simplify the problem by bypassing the more complex optimization steps.
	In contrast, our construction of KDE points does not involve solving non-convex optimization problems, thereby avoiding the complexities and workarounds inherent in such approaches. 
	Although KDE points require solving an ODE, which entails multiple evaluations of both the target and its gradient, this process is deterministic and can be efficiently handled using modern numerical solvers.
	While direct comparisons between the computational costs of these methods are not straightforward, the absence of non-convex optimization steps in the KDE approach provides clear procedural simplicity.
\end{remark}


\subsubsection{\hlc[ouryellow!16]{Kernel Mean Outbedding: Inverting a Kernel Mean Embedding}}
\label{section:Outbedding}

A specific instance of the scenario described in \Cref{section:Convolution_Sampling}, where we aim to sample from $\check{\den}^{h}$ and can only evaluate $\trd = \check{\den}^{h} \ast \kappa^{h}$ and its gradient, arises in the context of inverting a kernel mean embedding. This inversion is necessary, for instance, in the final step of conditional mean embedding methods.
Kernel methods, grounded in the mathematical framework of reproducing kernel Hilbert spaces (RKHS), are fundamental tools in machine learning and statistics \citep{schoelkopf2018learning}. 
The central idea is to embed points $x, x' \in \cX$ (representing data, observations, parameters, etc.) into an RKHS $\cH_{k}$, associated with a symmetric positive definite kernel $k\colon \cX \times \cX \to \bR$. 
This is done via the canonical feature map $\varphi \colon \cX \to \cH_{k}$, where $\varphi(x) \defeq k(x, \quark)$. 
In this space, many problems become linear, and the inner product $\innerprod{\varphi(x)}{\varphi(x')}_{\cH_{k}} = k(x,x')$ can be computed efficiently using the ``kernel trick" which avoids the need to explicitly evaluate the feature map or the inner product in $\cH_{k}$ \citep{berlinet2004rkhs, steinwart2008support}.

Apart from the embedding of points $x\in \cX$, the embedding of probability distributions $\bP$ or densities $\rho$ on $\cX$ into the same RKHS $\cH_{k}$ have recently gained a lot of attention and popularity.
For an overview on methods based on such \emph{kernel mean embeddings} (KMEs; \citealt{smola2007embedding,berlinet2004rkhs}),
\[
\mu_{\bP}
\defeq
\int_{\cX} k(x,\quark) \, \mathrm{d}\bP(x) \in \cH_{k},
\qquad
\mu_{\rho}
\defeq
\int_{\cX} k(x,\quark) \, \rho(x)\, \mathrm{d}x \in \cH_{k},
\]
see \citet{muandet2017kernel}.
In particular, the \emph{conditional mean embedding} (CME; \citealt{song2009hilbert,fukumizu2013kernel,klebanov2020rigorous}) performs the conditioning of random variables $X$ in $\cX$ and $Y$ in $\cY$ by a linear-algebraic transformation in the corresponding RKHSs $\cH_{k}$, $\cH_{\ell}$, $\ell$ being a kernel on $\cY$ with canonical feature map $\psi(y) = \ell(y,\quark)$:
\begin{equation}
	\label{equ:myCME}
	\mu_{\bP_{Y|X = x}}
	=
	\mu_{\bP_{Y}} + (C_{X}^{\dagger} C_{XY})^{\ast} \, (\varphi(x) - \mu_{\bP_{X}})
	\qquad
	\text{for $\bP_{X}$-a.e.\ $x\in\cX$.}
\end{equation}
Here, $\mu_{\bP_{Y|X = x}} \in \cH_{\ell}$ is the embedding of the conditional distribution $\bP_{Y|X=x}$ while $C_{X}$ and $C_{XY}$ denote the (cross-) covariance operators of $\varphi(X)$ and $\psi(Y)$, respectively (for details see \citealt{klebanov2020rigorous,klebanov2020linear}).

One crucial step of these methodologies is the inversion of the kernel mean embedding, which we term \emph{kernel mean outbedding}, i.e.\ the recovery of $\bP_{X}$ from its embedding $\mu_{X}$ (e.g., in the case of CMEs, it is crucial to regain the conditional distribution of interest $\bP_{Y|X=x}$ from its embedding  $\mu_{\bP_{Y|X = x}}$).
While the KME is known to be injective as a function from
\[
\cP_{k}
\defeq
\{ \bP \mid \bP \text{ is a probability\ measure\ on } \cX \text{ with } \int_{\cX} \norm{ \varphi(x) }_{\cH_{k}} \, \rd \bP (x) < \infty \}
\]
into $\cH_{k}$ for a large class of kernels $k$, so-called \emph{characteristic}
kernels, the outbedding step is highly non-trivial, especially in high dimensions.
Here, we focus on $\cX = \bR^{d}$ as well as on characteristic and translation-invariant kernels, i.e.\ $k(x,x') = \kappa^{h}(x-x')$ for some symmetric and positive definite probability density $\kappa^{h} \colon \cX \to \bR$, and, as usual, work with probability densities $\rho$ rather than distributions $\bP$.
In this case, $\mu_{X} = \rho \ast \kappa^{h}$ and, thereby, outbedding corresponds to a \emph{deconvolution}.
Note that, since $\norm{ \varphi(x) }_{\cH_{k}} = \sqrt{k(x,x)} = \sqrt{\kappa^{h}(0)}$, every probability measure on $\cX$ lies in the space $\cP_{k}$ and, consequently, has a well-defined KME.

Now, if we set $\trd \defeq \mu_{\den}$, we get $\check{\den}^{h} = \den$ because $k$ is characteristic. This makes it one of the rare cases from \Cref{section:Convolution_Sampling} where $\trd$ can be evaluated directly while sampling from the unknown density $\check{\den}^{h}$.
In such cases, what might seem like a disadvantage of KDE turns out to be beneficial.
Note also that, in this specific case, the ``deconvolved'' density $\check{\den}^{h}$ with $\check{\den}^{h} \ast \kappa^{h} = \trd$ is guaranteed to exist.
Hence, we can apply the strategy from \Cref{section:Convolution_Sampling} to obtain (super-) samples from $\check{\den}^{h}$ by only evaluating $\trd$ and its gradient.

\subsubsection{\hlc[ouryellow!16]{Resampling within Sequential Monte Carlo}}
\label{section:SMC_resampling}

\begin{figure}[t]
	\centering
	\begin{subfigure}[b]{0.32\textwidth}
		\centering
		\includegraphics[width=\textwidth]{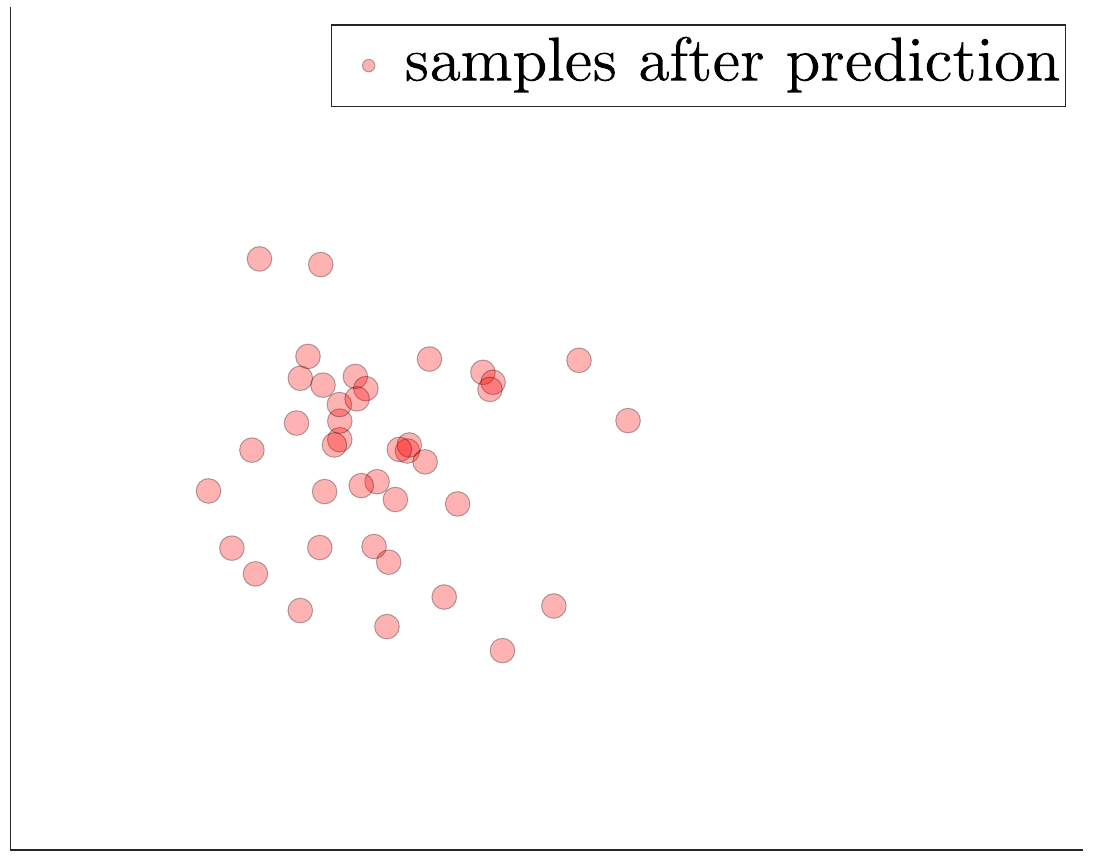}
		\caption*{points after first prediction step \newline }			
	\end{subfigure}
	\hfill
	\begin{subfigure}[b]{0.32\textwidth}
		\centering
		\includegraphics[width=\textwidth]{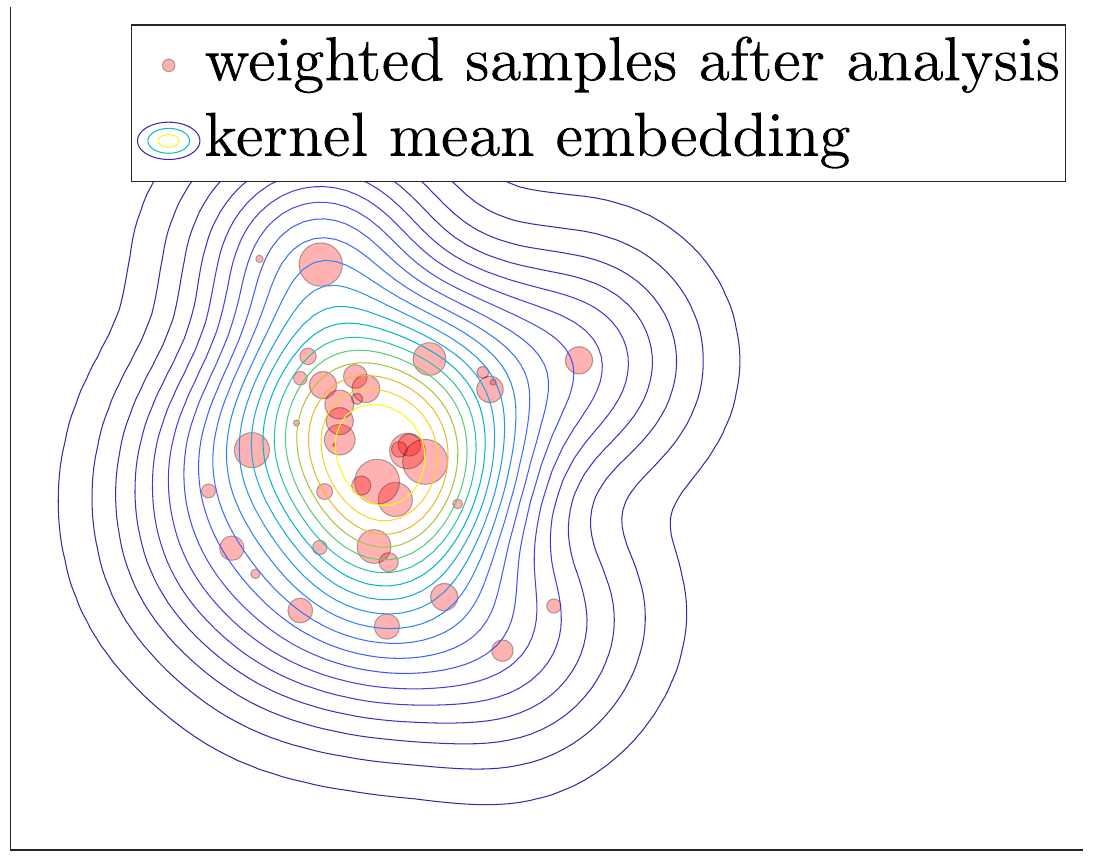}
		\caption*{reweighted points after analysis, with kernel mean embedding}			
	\end{subfigure}
	\hfill
	\begin{subfigure}[b]{0.32\textwidth}
		\centering
		\includegraphics[width=\textwidth]{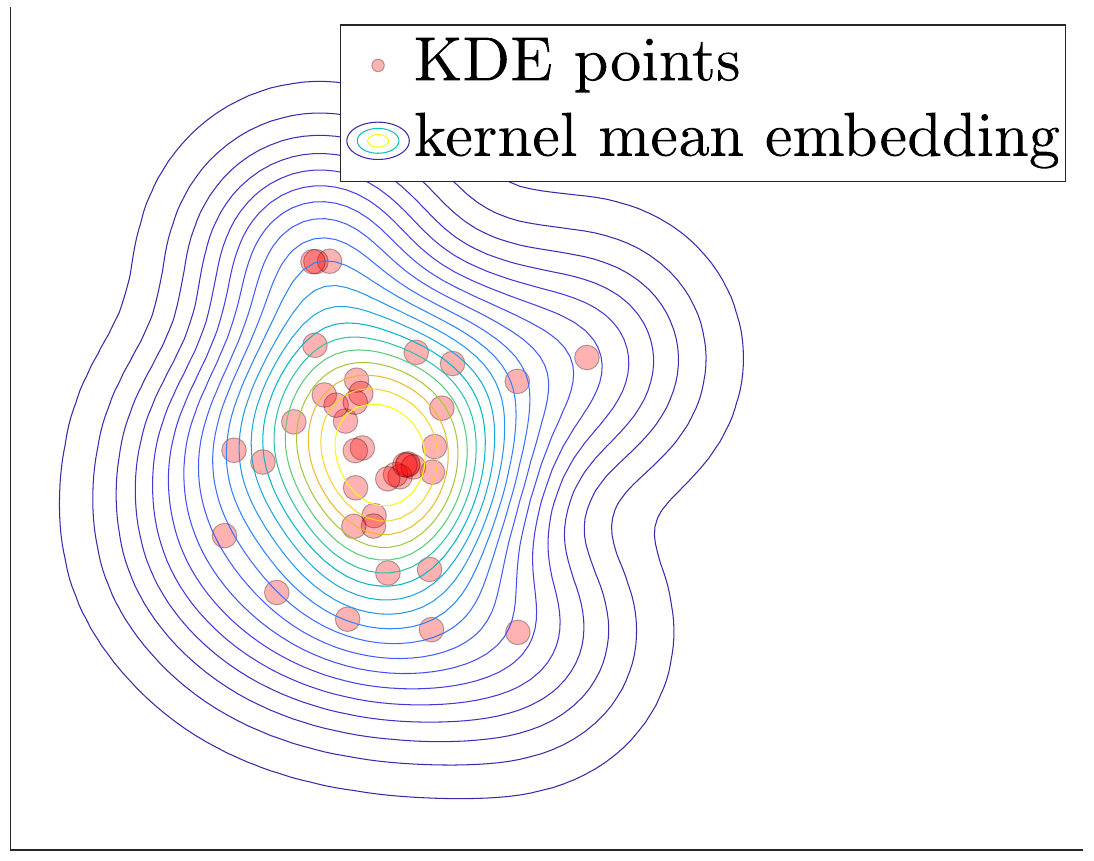}
		\caption*{KDE points with kernel mean embedding}			
	\end{subfigure}
	\vfill
	\begin{subfigure}[b]{0.32\textwidth}
		\centering
		\includegraphics[width=\textwidth]{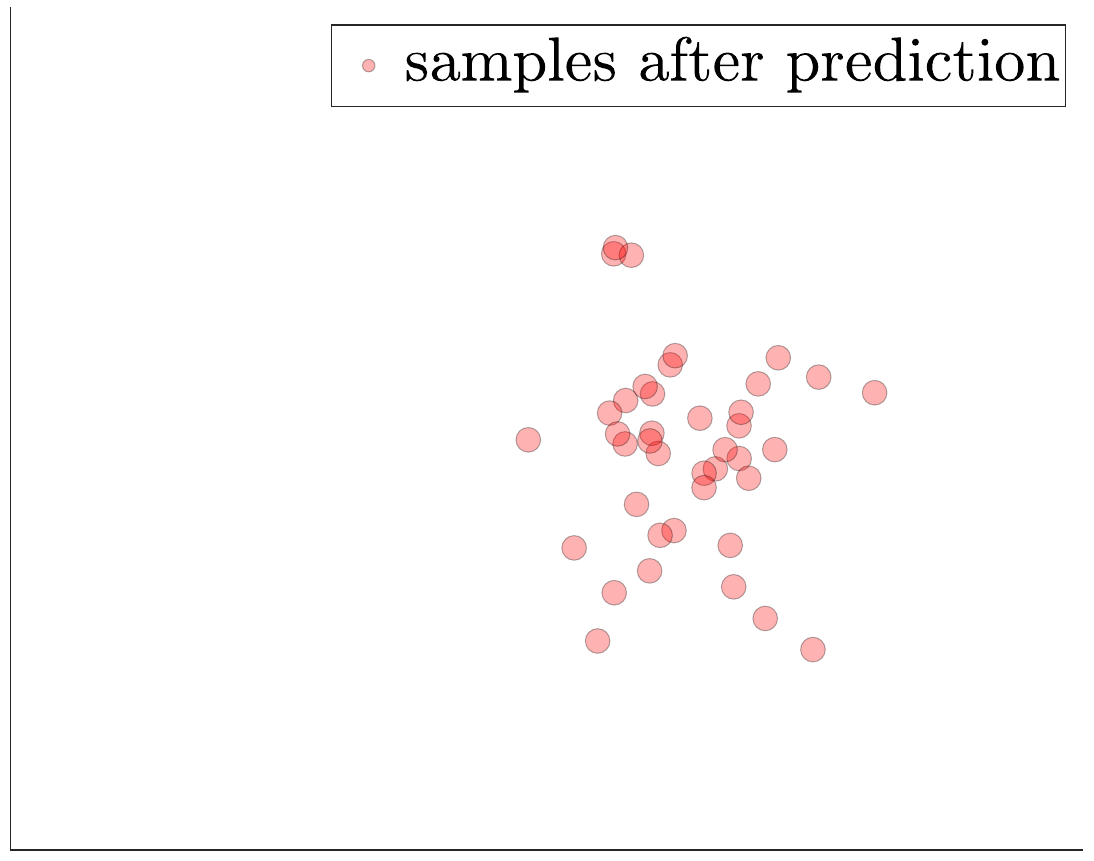}
		\caption*{points after second prediction step}			
	\end{subfigure}
	\hfill	
	\begin{subfigure}[b]{0.32\textwidth}
		\centering
		\includegraphics[width=\textwidth]{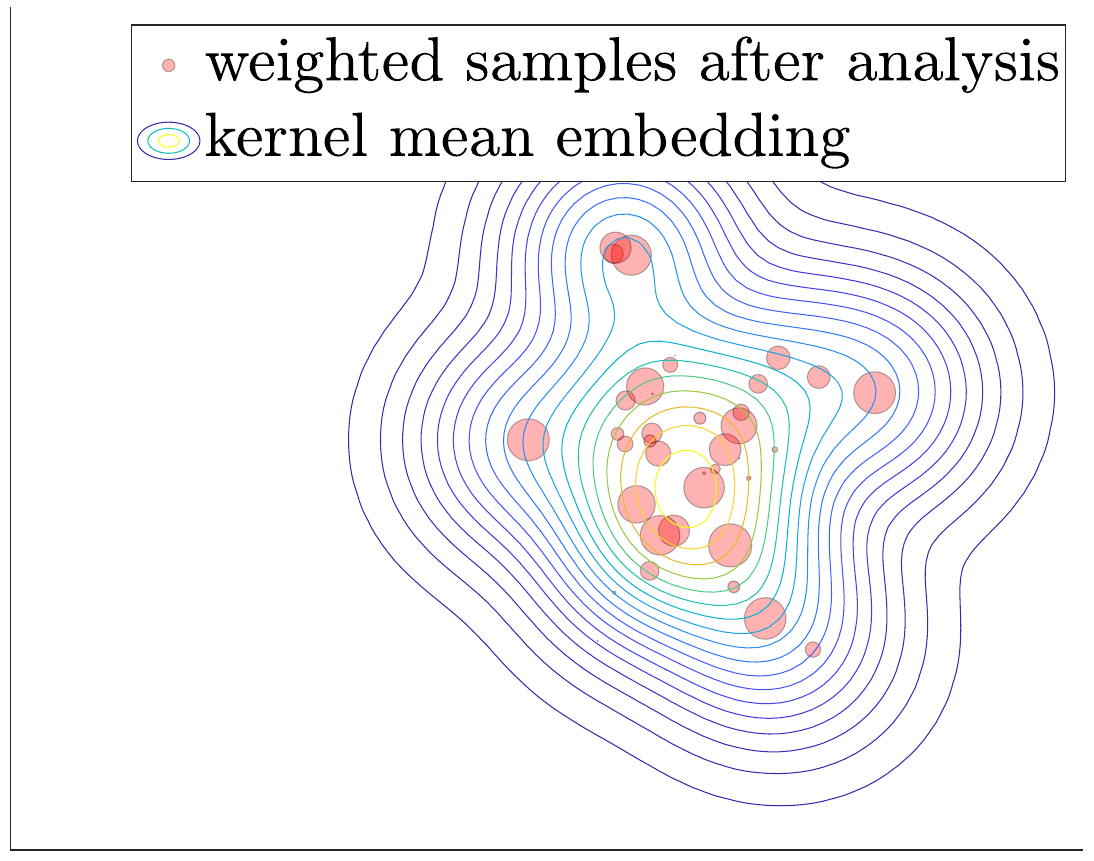}
		\caption*{reweighted points after analysis, with kernel mean embedding}			
	\end{subfigure}
	\hfill
	\begin{subfigure}[b]{0.32\textwidth}
		\centering
		\includegraphics[width=\textwidth]{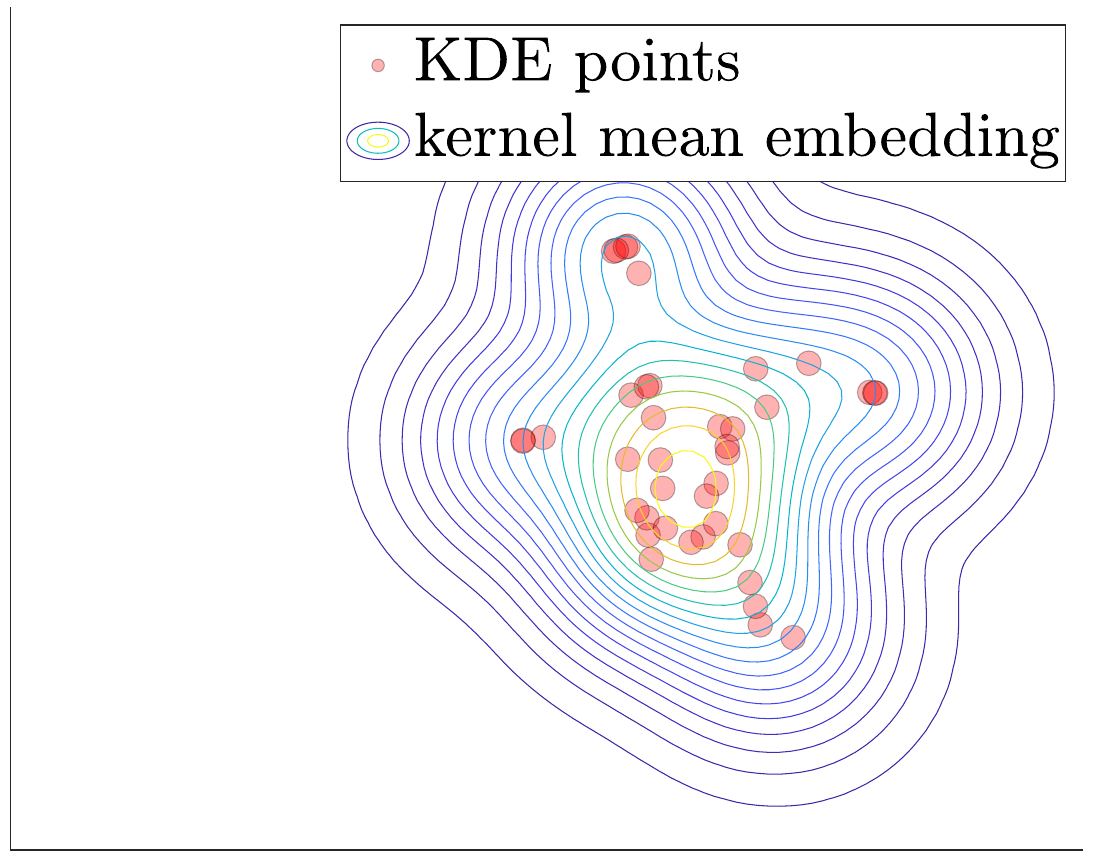}
		\caption*{KDE points with kernel mean embedding}			
	\end{subfigure}
	\caption{Illustration of two steps in SMC for a toy example. The resampling step is carried out by first embedding the weighted point set, followed by outbedding to obtain an unweighted set of KDE points. With only 40 KDE points, the kernel mean embedding is approximated with notable accuracy.}
	\label{fig:SMC_embedding_outbedding}
\end{figure}

Sequential Monte Carlo (SMC) methods are widely used for filtering and particle approximation in dynamical systems, particularly when dealing with non-linear or non-Gaussian models \citep{doucet2000sequential,Doucet2001SMC,delmoral2006smc,Chopin2020SMC}. 
SMC operates through a series of iterations, each consisting of two primary steps: prediction and analysis (also called the update step). 

In the \emph{prediction step}, particles representing the current state distribution are propagated forward using a state transition model. 
This results in a particle approximation of the predicted (prior) distribution.
Next, in the \emph{analysis step}, the particle weights are updated according to the likelihood of the new observations, yielding a weighted particle approximation of the posterior distribution. 

A crucial aspect of SMC is the \emph{resampling step}, which addresses particle weight degeneration. 
Over time, a few particles tend to dominate the weighted sample, reducing the diversity of the particle population.
Resampling reselects particles in proportion to their weights, ensuring that low-weight particles are discarded while high-weight particles are duplicated, thereby maintaining particle diversity \citep[Chapter~9]{Chopin2020SMC}.

The resampling step can be performed using kernel mean embedding and outbedding techniques. 
Instead of directly resampling the particles, one can embed the posterior distribution (represented by the weighted particles) into a reproducing kernel Hilbert space (RKHS) via its kernel mean embedding.
The outbedding step then recovers a set of (unweighted!) resampled particles by approximating the posterior distribution from the embedding by KDE points, as illustrated in \Cref{fig:SMC_embedding_outbedding} for a toy example.
This allows for a principled way to resample particles that effectively captures the underlying distribution, leveraging the properties of KDE points to approximate the posterior distribution effectively (see \Cref{section:Outbedding} for more details).

Such an approach ensures that the resampled particles effectively represent the posterior distribution, offering a reliable and consistent resampling strategy.


\section{Simulated Annealing}
\label{section:Annealing}

As discussed in \Cref{section:Transport_Sampling}, when using a relatively small or moderate number of KDE points, the dynamics \eqref{equ:ODE_with_KDE} may fail to capture certain modes or regions of high $\trd$-probability. This issue is illustrated in \Cref{fig:Rezende_KDE_QMC_vs_MCMC} (left and middle), where the initial distribution struggles to allocate sufficient weight to certain areas. 

We can exacerbate this effect by selecting an initial distribution further from the origin, such as $\simpd = \cN\big( (2,2), \Id_{2} \big)$, as shown in \Cref{fig:Rezende_KDE_annealing} (top). In this case, nearly all of the 40 KDE points are positioned to the right of the origin, and the resulting KDE $\hat{\den}_{t}^{h}$ provides such a poor approximation of the target distribution $\trd$ that even importance sampling, as applied in \Cref{section:Direct_Sampling,section:Transport_Sampling}, is unlikely to sufficiently mitigate this issue with a reasonable number of samples.

For the approaches described in \Cref{section:Convolution_Sampling,section:Outbedding}, where importance sampling is typically not applicable, the consequences could be even more severe.

\begin{figure}[t]
	\centering
	\hfill
	\begin{subfigure}[b]{0.24\textwidth}
		\centering
		\includegraphics[width=\textwidth]{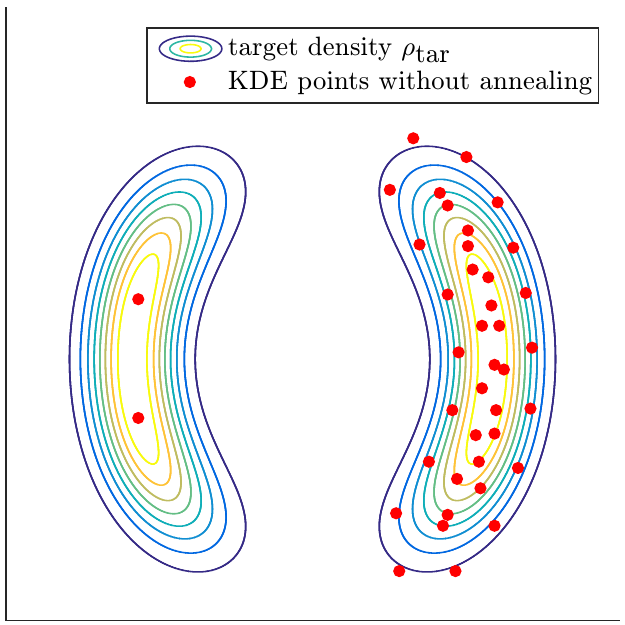}
	\end{subfigure}
	\begin{subfigure}[b]{0.24\textwidth}
		\centering
		\includegraphics[width=\textwidth]{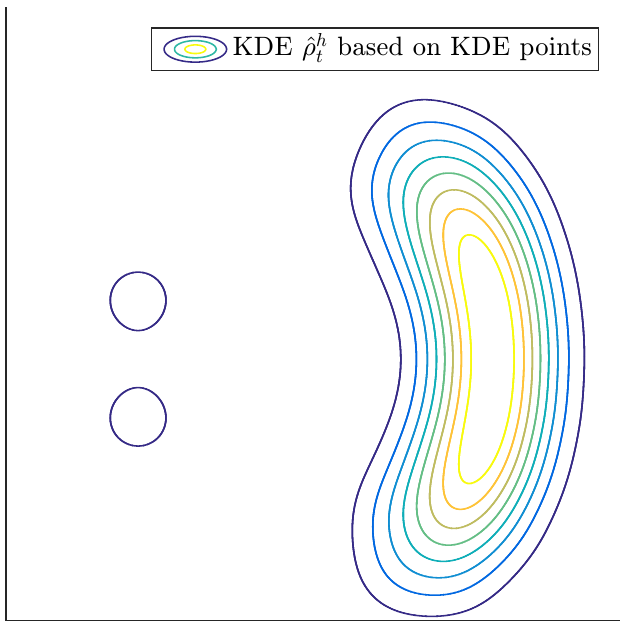}	
	\end{subfigure}
	\vfill	
	\begin{subfigure}[b]{0.24\textwidth}
		\centering
		\includegraphics[width=\textwidth]{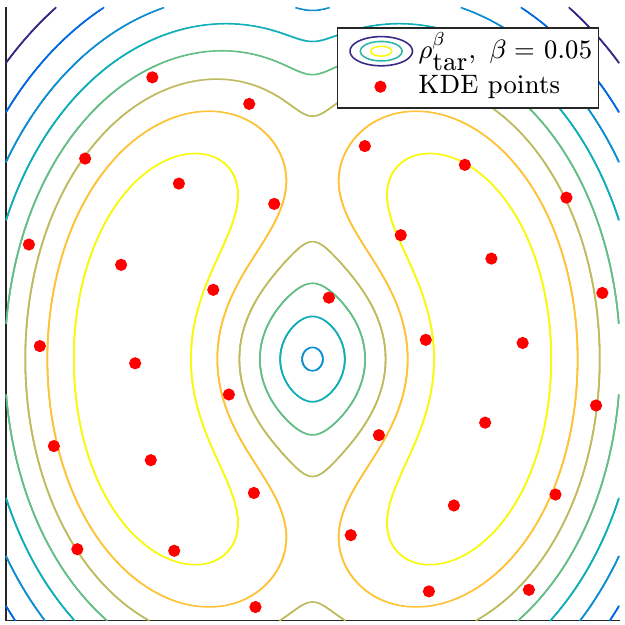}	
	\end{subfigure}
	\begin{subfigure}[b]{0.24\textwidth}
		\centering
		\includegraphics[width=\textwidth]{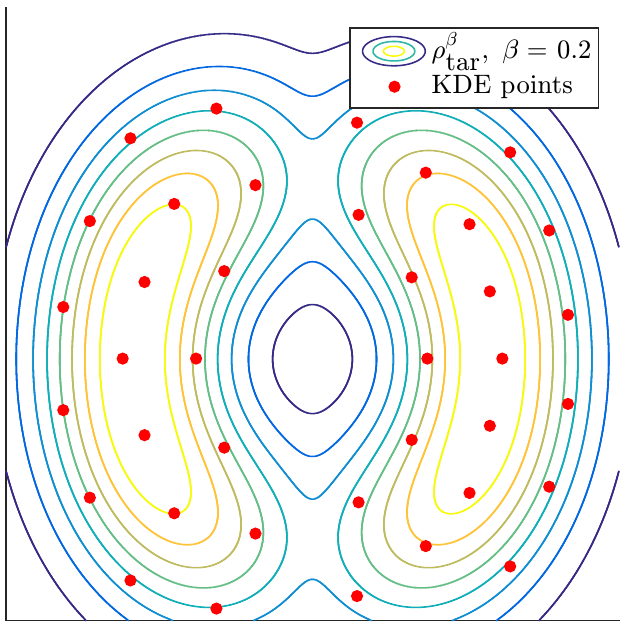}	
	\end{subfigure}
	\begin{subfigure}[b]{0.24\textwidth}
		\centering
		\includegraphics[width=\textwidth]{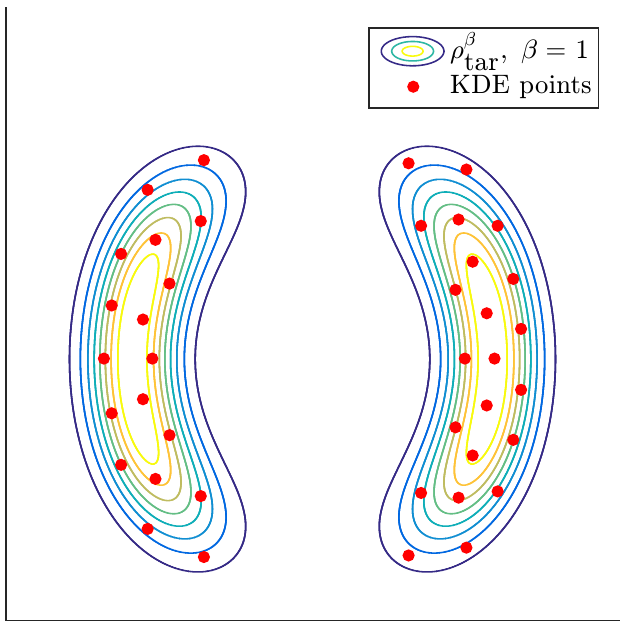}	
	\end{subfigure}
	\begin{subfigure}[b]{0.24\textwidth}
		\centering
		\includegraphics[width=\textwidth]{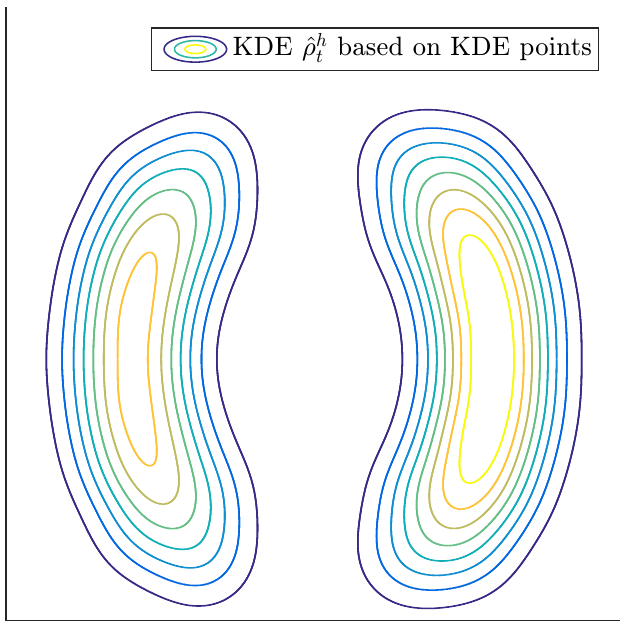}	
	\end{subfigure}
	\caption{\emph{Top:} Without simulated annealing, most KDE points can get trapped in the mode closest to their initial position, leading to a poor approximation $\hat{\den}_{t}^{h}$ of the target (right).
		\emph{Bottom:} Simulated annealing enables the particles to explore the space more extensively, resulting in a better identification of all modes and a more accurate KDE $\hat{\den}_{t}^{h}$ of the target (right).
		Only three inverse temperatures, $\beta_{1} = \nicefrac{1}{20}, \beta_{2} = \nicefrac{1}{5}, \beta_{3} = 1$, were used here.
	}
	\label{fig:Rezende_KDE_annealing}
\end{figure}

As a remedy, we propose using a technique known as \emph{simulated annealing} \citep{kirkpatrick1983simulatedannealing,van1987simulated} in the context of optimization or \emph{tempering} \citep{swendsen1986replica,geyer1995annealing,earl2005parallel} in MCMC settings. This involves artificially increasing the ``temperature'' of the system (parametrized here by the ``inverse temperature'' $\beta \geq 1$) in \eqref{equ:SDE}, and accordingly, in \eqref{equ:ODE_with_KDE}, then gradually cooling the system to the target temperature, $\beta = 1$. This approach allows the system to explore the state space more broadly before converging towards higher-probability modes. 

The tempered SDE can be expressed as
\begin{equation*}
	\mathrm d Y_{\beta}(t)
	=
	\nabla \log \trd (Y_{\beta}(t))\, \mathrm dt + \sqrt{2\beta^{-1}}\, \mathrm dW(t),
	\qquad
	Y(0) = Y_\init \sim \den_\init,
\end{equation*}
with stationary distribution $\den_{\beta,\textup{tar}} \, \propto \, \trd^{\beta}$ and the probability density $\den_{\beta,t}$ evolving as follows:
\begin{equation*}
	\partial_t\den_{\beta,t}
	=
	-\diver(\den_{\beta,t} \nabla\log \trd) + \beta^{-1} \Delta\den_{\beta,t}
	=
	-\diver(\den_{\beta,t} \vel_{\beta,t}^{\FP}),
	\qquad
	\vel_{\beta,t}^{\FP}
	\defeq
	\beta^{-1} \nabla\log \frac{\trd^{\beta}}{\rho_t}.
\end{equation*}
By modifying the velocity field to $\hat{\vel}_{\beta,t}^{h} = \beta^{-1} \nabla \log \frac{\trd^{\beta}}{\hat\den_t^{h}}$ in \eqref{equ:ODE_with_KDE}, and using a sequence of inverse temperatures $\beta_{1} \leq \cdots \leq \beta_{L} = 1$ for which the system evolves over certain time intervals, we enable the dynamics to explore the space more extensively, as shown in \Cref{fig:Rezende_KDE_annealing} (bottom). Alternatively, $\beta(t)$ can be chosen as a function of $t$ converging to $1$, i.e.\ $\beta(t)\nearrow 1$ as $t\to\infty$.

This procedure ensures that KDE points avoid getting ``trapped'' in certain modes, leading to a more even sampling of KDE points and yielding a better approximation $\hat{\den}_{t}^{h}$ of the target $\trd$, as demonstrated in \Cref{fig:Rezende_KDE_annealing} (bottom).

The mathematical analysis of simulated annealing in this context, as well as the discussion of optimal cooling schedules, is beyond the scope of this paper. This approach will likely differ from conventional simulated annealing theory \citep{van1987simulated}, as the dynamics \eqref{equ:ODE_with_KDE} is deterministic rather than stochastic (except for the initial sampling). A simple, though not necessarily optimal, cooling schedule involves running the dynamics at each temperature until convergence is reached (cf.\ the stopping criterion in \Cref{section:technical}), before moving to the next lower temperature.

\subsection{Identifying the need for simulated annealing}

The KDE point method offers a straightforward approach to determine whether certain regions are underrepresented by the sample points. Since $\hat{\rho}_{t}^{h} \defeq \KDE_{\kappa}^{h}[\bsX(t)]$ aims to approximate $\trd$, ideally, the ratio $\frac{\hat{\rho}_{t}^{h}}{\trd}$ should be a constant $C > 0$ (not necessarily $C = 1$, as $\trd$ may not be normalized). In practice, minor deviations in this ratio are acceptable. Slight under- or overrepresentation of certain regions can be corrected by importance sampling. However, if the discrepancy is significant, the ratio will exhibit strong fluctuations, indicating the need for further intervention, which can be identified by evaluating the ratio at the KDE points.

In the example shown in \Cref{fig:Rezende_KDE_annealing} (top), the ratio for the points on the ``right side'' lies between $[0.16, 0.69]$, while for the two points on the left side, it equals $0.025$. This significant imbalance suggests the need for tempering, as shown in \Cref{fig:Rezende_KDE_annealing} (bottom), where the adjusted ratios range from $[0.09, 0.27]$ on the right and $[0.08, 0.21]$ on the left—an acceptable discrepancy that can be handled by importance sampling.

The following strategy summarizes the steps:
\begin{enumerate}[label = (\roman*)]
	\item
	Run the dynamics \eqref{equ:ODE_with_KDE} without tempering.
	\item
	If the fluctuations in the ratios $\frac{\hat{\rho}_{t}^{h}}{\trd}(X_{j}(\infty))$ are significant, for example, if the largest ratio exceeds the smallest by a factor of 10, then underrepresentation is likely, and the temperature should be increased (e.g., tripled).
	\item
	Rerun the dynamics \eqref{equ:ODE_with_KDE} at the adjusted temperature and recheck the ratio fluctuation for the tempered target $\trd^{\beta}$.
	\item
	Repeat this process until the ratio fluctuation is sufficiently reduced.
	\item
	Finally, rerun the dynamics multiple times with the previously used temperatures in decreasing order until the original dynamics \eqref{equ:ODE_with_KDE} is recovered.
\end{enumerate}

If, after this process, the ratio fluctuation remains unsatisfactory, the number of points $J$ should be increased, and possibly the bandwidth $h$ adapted.

\section{Conclusion and Outlook}
\label{section:Conclusion}

This paper began by focusing on a particle-based variational inference method, then expanded to introduce a novel approach for approximating a target distribution $\trd$ through a mixture distribution $\hat{\den}_{t}^{h}$.
Specifically, this mixture is represented by a kernel density estimate (KDE) constructed from the final particle positions, referred to as \emph{KDE points}.

There are numerous methods for sampling from mixture distributions, including independent and stratified sampling, as well as transporting quasi-Monte Carlo points, sparse grids, and other higher-order point sequences to $\hat{\den}_{t}^{h}$ \citep{Cui2023quasimonte,klebanov2023transporting}. Combining these techniques with (self-normalized) importance sampling creates robust (quasi-) Monte Carlo methods, which have demonstrated superior performance on a bivariate bimodal target distribution.

In addition, we discuss the distribution $\check{\den}^{h}$ of the KDE points themselves, which approximately solves $\check{\den}^{h} \ast \kappa^{h} \approx \trd$, and demonstrate a super-root-$n$ convergence of the corresponding Monte Carlo estimator.
Our method aligns with the framework of kernel herding \citep{chen2010super} and Sequential Bayesian Quadrature \citep{huszar2012herding}, demonstrating comparable performance.
While setups like these are rare in practice, we identified a key area---\emph{kernel mean outbedding}---where this approach proves particularly beneficial, especially in machine learning methods like conditional mean embeddings.
Moreover, we explored how embedding-outbedding strategies could enhance the resampling step in sequential Monte Carlo methods.

A key advantage of the KDE point construction is that it bypasses the need for the normalization constant of $\trd$.
This is particularly significant for Bayesian inverse problems, where this constant is often unknown, highlighting the practical relevance of our approach.

Several important questions remain open. First, while particle convergence has been assumed, a formal investigation of the conditions guaranteeing this would deepen the theoretical foundation of the method.
Additionally, formal proof of the empirical super-root-$n$ convergence claim would be a significant theoretical contribution.
Finally, optimizing the kernel choice, bandwidth, and particle weights (cf.\ \Cref{section:discussion_interacting_particle_dynamics}) offers exciting possibilities for further research, potentially unlocking even greater flexibility and performance improvements.

\section*{Acknowledgements}
\addcontentsline{toc}{section}{Acknowledgements}

This research was funded by the Deutsche Forschungsgemeinschaft (DFG, German Research Foundation) under Germany's Excellence Strategy (EXC-2046/1, project 390685689) through projects EF1-10 and EF1-19 of the Berlin Mathematics Research Center MATH+.
The author would like to thank P\'eter Koltai, Nicolas Perkowski, Sebastian Reich, Claudia Schillings, Ingmar Schuster, Tim Sullivan, and Simon Weissmann for many inspiring discussions and helpful suggestions.

\appendix

\section{Proofs}
\label{appendix:Proofs}

\begin{proof}[Proof of \Cref{theorem:Reduction_KL_KDE}]
	By \eqref{equ:ODE_with_KDE}, $(\hat{\den}_{t}^{h})_{t\geq 0}$ satisfies the continuity equation
	\begin{align}
		\label{equ:perturbed_continuity_equation}
		\partial_{t} \hat{\den}_{t}^{h}
		&=
		- \frac{1}{J}\sum_{j=1}^{J} \nabla\kappa^{h}(\quark - X_j(t))^{\intercal} \, \hat{v}_{t}^{h}(X_{j}(t))
		=
		- \diver( \hat{\den}_{t}^{h} w_{t}^{h}),
		\\
		\label{equ:perturbed_velocity_field}
		w_{t}^{h}
		&\defeq
		\frac{1}{J\hat{\den}_{t}^{h}} \sum_{j=1}^{J} \kappa^{h}(\quark - X_j(t)) \, \hat{v}_{t}^{h}(X_{j}(t)).
	\end{align}
	Since $\hat{v}_{t}^{h}$ is continuous there exists $\varepsilon > 0$ such that, for each $j=1,\dots,J$ and $x\in \Ball{0}{\varepsilon}$,
	\[
	\hat{v}_{t}^{h}(X_{j}(t))^{\intercal} \, \hat{v}_{t}^{h}(X_j(t) + x)
	\geq
	\tfrac{1}{2}\norm{\hat{v}_{t}^{h}(X_{j}(t))}^{2}.
	\]
	If $\hat{v}_{t}^{h}(X_{j}(t)) = 0$ for each $j=1,\dots,J$, then there is nothing to show.
	Otherwise,
	\[
	A \defeq J^{-1} \sum_{j=1}^{J} \norm{\hat{v}_{t}^{h}(X_{j}(t))}^{2} > 0,
	\qquad
	B \defeq J^{-1} \sum_{j=1}^{J} \norm{\hat{v}_{t}^{h}(X_{j}(t))} > 0
	\]
	are strictly positive and, by \eqref{equ:technical_decay_condition}, there exists $h_{0} > 0$ such that, for any $0<h \leq h_{0}$ and $j=1,\dots,J$,
	\[
	\int_{\Ball{0}{\varepsilon}} \kappa^{h}(x)\, \mathrm{d}x
	\geq
	\tfrac{1}{2},
	\qquad
	\int_{\bR^{d}\setminus \Ball{0}{\varepsilon}} \kappa^{h}(x) \, \norm{\hat{v}_{0}^{h}(X_{j} + x)}\, \mathrm{d} x
	\leq
	\tfrac{A}{4B}.
	\]
	Setting $h_{\ast} \defeq h_{0}$, \Cref{prop:KL_change} implies for every $0<h\leq h_{\ast}$
	\begin{align*}
		-\tfrac{\mathrm d}{\mathrm dt}&\big|_{t=0} \dkl(\hat{\den}_{t}^{h}\|\trd)
		=
		\innerprod{w_{0}^{h}}{\hat{\vel}_{0}^{h}}_{L^2_{\hat{\den}_{0}^{h}}}
		\\
		&=
		\frac{1}{J} \sum_{j=1}^{J} \int_{\bR^{d}} \kappa^{h}(x - X_{j}) \, \hat{v}_{0}^{h}(X_{j})^{\intercal} \, \hat{v}_{0}^{h}(x)\, \mathrm{d} x
		\\
		&\geq
		\frac{1}{J} \sum_{j=1}^{J}
		\int_{\Ball{0}{\varepsilon}} \kappa^{h}(x) \, \hat{v}_{0}^{h}(X_{j})^{\intercal} \, \hat{v}_{0}^{h}(X_{j} + x)\, \mathrm{d} x		
		-
		\norm{\hat{v}_{0}^{h}(X_{j})} \int_{\bR^{d}\setminus \Ball{0}{\varepsilon}} \kappa^{h}(x) \, \norm{\hat{v}_{0}^{h}(X_{j} + x)}\, \mathrm{d} x
		\\
		&\geq
		\frac{1}{J} \sum_{j=1}^{J}
		\frac{\norm{\hat{v}_{0}^{h}(X_{j})}^{2}}{4} - \frac{A\norm{\hat{v}_{0}^{h}(X_{j})}}{4B}
		\\
		&=
		\frac{A}{4} - \frac{AB}{4B}
		\\
		&=
		0.
	\end{align*}	
\end{proof}

\begin{proof}[Proof of \Cref{theorem:Reduction_KL_KDE_infinite_ensemble}]
	If $\overline{\vel}_{0}^{h} = 0$ Lebesgue-almost everywhere, then there is nothing to show.
	Otherwise, since $\rho$ and $\overline{\vel}_{0}^{h} \in L_{\rho}^{2}$ are continuous, the values $A \defeq \norm{\overline{\vel}_{0}^{h}}_{L_{\rho}^{2}}^{2} > 0$ and $B \defeq \norm{\overline{\vel}_{0}^{h}}_{L_{\rho}^{1}} > 0$ are finite and strictly positive.
	By \ref{item:Cross_Correlation_in_L2}, there exist $h_{2}>0$ and a compact set $\cK \subseteq \bR^{d}$ such that, for any $0<h\leq h_{2}$,
	\[
	\Absval{\int_{\bR^{d}\setminus \cK} \rho(y) \, \overline{\vel}_{0}^{h}(y)^{\intercal} (\overline{\vel}_{0}^{h}\star \kappa^{h})(y)\, \mathrm dy}
	\leq
	\int_{\bR^{d}\setminus \cK} \rho(y) \, (\norm{\overline{\vel}_{0}^{h}(y)}^{2} +  \norm{(\overline{\vel}_{0}^{h}\star \kappa^{h})(y)}^{2})\, \mathrm dy
	\leq
	\frac{A}{8}.
	\]
	For $y\in\cK$ let
	\[
	E(y)
	\defeq
	\sup \big\{ \varepsilon>0 \mid \overline{\vel}_{0}^{h}(y+x)^{\intercal} \overline{\vel}_{0}^{h}(y) \geq \tfrac{1}{2} \norm{\overline{\vel}_{0}^{h}(y)}^{2} \text{ for all } x\in \Ball{0}{\varepsilon} \big\}.
	\]
	Since $\overline{\vel}_{0}^{h}$ is continuous, $E$ is continuous and strictly positive and thereby, since $\cK$ is compact, $E$ attains its minimum $\varepsilon \defeq \min_{y\in\cK} E(y) > 0$. Further, for $y\in\cK$ let
	\[
	H(y)
	\defeq
	\sup \bigg\{ h_{1} > 0 \biggm\vert \int_{\bR^{d}\setminus \Ball{0}{\varepsilon}} \kappa^{h}(x) \, \norm{\overline{v}_{0}^{h}(y + x)}\, \mathrm{d} x \leq \tfrac{A}{8B} \ \forall 0 < h \leq h_{1};\ \int_{\Ball{0}{\varepsilon}} \kappa^{h_{1}}(x)\, \mathrm{d}x \geq \tfrac{1}{2} \bigg\}.
	\]
	Since $\kappa$ is a continuous probability density and using \ref{item:compact_set}, $H$ is continuous and strictly positive and thereby, since $\cK$ is compact, $H$ attains its minimum $h_{3} \defeq \min_{y\in\cK} H(y) > 0$.
	Hence, for each $y\in\cK$ and $0 < h \leq h_{3}$,
	\begin{align*}
		\int_{\bR^{d}} \kappa^{h}(x-y)&\, \overline{\vel}_{0}^{h}(x)^{\intercal} \overline{\vel}_{0}^{h}(y)\, \mathrm{d}x
		\\
		&\geq
		\int_{\Ball{y}{\varepsilon}} \kappa^{h}(x-y)\, \overline{\vel}_{0}^{h}(x)^{\intercal} \overline{\vel}_{0}^{h}(y)\, \mathrm{d}x - 
		\Absval{\int_{\bR^{d}\setminus \Ball{y}{\varepsilon}} \kappa^{h}(x-y)\, \overline{\vel}_{0}^{h}(x)^{\intercal} \overline{\vel}_{0}^{h}(y)\, \mathrm{d}x}
		\\
		&\geq
		\int_{\Ball{0}{\varepsilon}} \kappa^{h}(x)\, \underbrace{\overline{\vel}_{0}^{h}(y+x)^{\intercal} \overline{\vel}_{0}^{h}(y)}_{\geq \tfrac{1}{2} \norm{\overline{\vel}_{0}^{h}(y)}^{2}}\, \mathrm{d}x -
		\norm{\overline{\vel}_{0}^{h}(y)} \underbrace{\int_{\bR^{d}\setminus \Ball{0}{\varepsilon}} \kappa^{h}(x) \, \norm{\overline{v}_{0}^{h}(y + x)}\, \mathrm{d} x}_{\leq \tfrac{A}{8B}}
		\\
		&\geq
		\frac{\norm{\overline{\vel}_{0}^{h}(y)}^{2}}{4} - \frac{A \, \norm{\overline{\vel}_{0}^{h}(y)}}{8B}.
	\end{align*}
	By \eqref{equ:TransportEquation}, the family of densities $(\rho_{t})_{t\geq 0}$ satisfies the continuity equation
	$\partial_t\den_t = -\diver(\den_t \overline{\vel}_{t}^{h}), \ \den_0 = \den$.
	By applying Leibniz integral rule and differentiation rules for convolutions (using \ref{item:L_1_condition}), it follows that the family of convoluted densities $(\overline{\den}_{t}^{h})_{t\geq 0}$ satisfies the continuity equation for sufficiently small times $t\geq 0$,
	\begin{align*}
		\partial_t \overline{\den}_{t}^{h}
		&=
		(\partial_t \den_{t}) \ast \kappa^{h}
		=
		- \big( \diver(\den_{t} \overline{\vel}_{t}^{h}) \big) \ast \kappa^{h}
		=
		-\diver\big((\den_{t} \overline{\vel}_{t}^{h})  \ast \kappa^{h} \big)
		=
		-\diver(\overline{\den}_{t}^{h} \overline{w}_t),
		\\
		\overline{w}_{t}
		&\defeq
		\frac{(\den_{t} \overline{\vel}_{t}^{h})\ast \kappa^{h}}{\overline{\den}_{t}^{h}}.
	\end{align*}
	By \Cref{prop:KL_change} and setting $h_{\ast} \defeq \min(h_{2},h_{3}) > 0$, it follows for $0 < h \leq h_{\ast}$ that,
	\begin{align*}
		-\tfrac{\mathrm{d}}{\mathrm{d}t} \big|_{t=0} &\dkl(\overline{\rho}_{t}^{h} \| \trd)
		=
		\innerprod{\overline{w}_{0}}{\overline{\vel}_{0}^{h}}_{L_{\overline{\rho}_{0}}^{2}}
		\\
		&=
		\int_{\bR^{d}} \big( (\den \overline{\vel}_{0}^{h})\ast \kappa^{h} \big)(x)^{\intercal} \, \overline{\vel}_{0}^{h}(x) \, \mathrm{d}x
		\\
		&=
		\int_{\bR^{d}} \int_{\bR^{d}} \kappa^{h}(x-y)\den(y) \overline{\vel}_{0}^{h}(y)^{\intercal}\overline{\vel}_{0}^{h}(x) \, \mathrm{d}y  \, \mathrm{d}x
		\\
		&\geq
		\int_{\cK} \rho(y) \int_{\bR^{d}} \kappa^{h}(x-y) \overline{\vel}_{0}^{h}(y)^{\intercal}\overline{\vel}_{0}^{h}(x) \, \mathrm{d}x\, \mathrm dy
		-
		\Absval{\int_{\bR^{d}\setminus \cK} \rho(y) \, \overline{\vel}_{0}^{h}(y)^{\intercal} (\overline{\vel}_{0}^{h}\star \kappa^{h})(y)\, \mathrm dy}
		\\
		&\geq
		\int_{\cK} \rho(y) \left(\frac{\norm{\overline{\vel}_{0}^{h}(y)}^{2}}{4} - \frac{A \, \norm{\overline{\vel}_{0}^{h}(y)}}{8B}\right)\, \mathrm dy - \frac{A}{8}
		\\
		&=
		\frac{A}{4} - \frac{A B}{8B} - \frac{A}{8}
		\\
		&=
		0.
	\end{align*}
\end{proof}

\bibliographystyle{abbrvnat}
\bibliography{myBibliography}
\addcontentsline{toc}{section}{References}

\end{document}